\title[Stable Sample Compression Schemes]{Stable Sample Compression Schemes: New Applications and an\\ Optimal SVM Margin Bound}
\newlength{\dhatheight}
\newcommand{\Px}{P_{X}}
\newcommand{\PXY}{P}
\newcommand{\J}{\mathcal{J}}
\newcommand{\vc}{{\rm d}}
\renewcommand{\dim}{d} 
\renewcommand{\epsilon}{\varepsilon}
\newcommand{\eps}{\varepsilon}
\newcommand{\X}{\mathcal X} 
\newcommand{\Y}{\mathcal Y} 
\newcommand{\alg}{\mathcal{A}} 
\renewcommand{\H}{\mathcal{H}} 
\DeclareSymbolFont{bbold}{U}{bbold}{m}{n}
\DeclareSymbolFontAlphabet{\mathbbold}{bbold}
\newcommand{\ind}{\mathbbold{1}}
\newcommand{\sign}{{\rm sign}} 
\renewcommand{\P}{{\rm Pr}} 
\newcommand{\I}{\mathcal{I}}
\newcommand{\nats}{\mathbb{N}} 
\newcommand{\reals}{\mathbb{R}} 
\newcommand{\E}{\mathbb{E}}
\newcommand{\DIS}{{\rm DIS}}
\newcommand{\ignore}[1]{}
\newcommand{\oldstuff}[1]{}
\newcommand{\new}[1]{}
\newcommand{\sh}[1]{}
\newcommand{\hide}[1]{}
\newcommand{\paren}[1]{\left( #1 \right)}
\newcommand{\tlprn}[1]{\left\{ #1 \right\}}
\renewcommand{\set}[1]{\tlprn{#1}}
\newcommand{\R}{\mathbb{R}}
\newcommand{\beq}{\begin{eqnarray*}}
\newcommand{\eeq}{\end{eqnarray*}}
\newcommand{\beqn}{\begin{eqnarray}}
\newcommand{\eeqn}{\end{eqnarray}}
\newsavebox{\savepar}
\newcommand{\vast}{\bBigg@{3}}
\newcommand{\Vast}{\bBigg@{4}}
\newcommand{\x}{\vec{x}}
\newcommand{\w}{\vec{w}}
\renewcommand{\dim}{d}
\renewcommand{\phi}{\varphi}
\newcommand{\perc}{\alg_p}
\renewenvironment{proof}[1][]{\par\noindent{\bf Proof #1\ }}{\hfill\BlackBox\\[2mm]}
\begin{document}

\maketitle

\begin{abstract}
We analyze a family of supervised learning algorithms based on sample compression schemes that are {\em stable}, in the sense that removing points from the training set which were not selected for the compression set does not 
alter
the resulting classifier.  We use this technique to derive a variety of novel or improved data-dependent generalization bounds for several learning algorithms.  In particular, we prove a new margin bound for SVM, removing a log factor. The new bound is provably optimal.  This resolves a 
long-standing open question about the PAC margin bounds achievable by SVM.

\end{abstract}

\section{Introduction}
\label{sec:intro}

The recent work of \citet*{bousquet:20,bousquet:20b} 
introduced a new technique for proving 
PAC generalization guarantees for a special 
type of compression scheme, called a \emph{stable} 
compression scheme: namely, a compression scheme 
$(\kappa,\rho)$ for which any $S, S' \in (\X \times \Y)^*$
with $\kappa(S) \subseteq S' \subseteq S$ has 
$\rho(\kappa(S')) = \rho(\kappa(S))$.
They proved that for any stable compression scheme of 
size $k$, it holds with probability at least $1-\delta$ 
over the draw of $n$ i.i.d.\ samples $S$,  
that the risk 
$R(\rho(\kappa(S))) = O\!\left( \frac{k}{n} + \frac{1}{n}\log\!\left(\frac{1}{\delta}\right)\right)$, 
provided that the sample $S$ is guaranteed to be 
separable by the compression scheme: 
that is, the empircal risk $\hat{R}_{S}(\rho(\kappa(S))) = 0$.
This 
presents
an improvement over the traditional 
analysis of general compression schemes, which has 
an additional $\log n$ factor on the first term
\citep*{littlestone:86,floyd:95}.
They used this result to provide new PAC generalization 
guarantees for a variety of learning algorithms and 
techniques, including establishing that the 
\emph{Support Vector Machine} (SVM) 
algorithm for learning linear separators in 
$\reals^d$ obtains the minimax optimal bound 
$O\!\left( \frac{d}{n} + \frac{1}{n}\log\!\left(\frac{1}{\delta}\right)\right)$, 
which resolved a long-standing open question 
dating back to the early work of \citet*{vapnik:74}.

In the present work, we explore further implications 
of this technique for analyzing stable compression schemes.
Our contributions are of two types.
First, while the applications discussed by 
\citet*{bousquet:20,bousquet:20b} focused on 
data-independent generalization bounds, 
in the present work 
we investigate further implications of this technique 
toward providing improved \emph{data-dependent} 
PAC generalization 
bounds for several learning algorithms.  
In particular, one of the results we prove in this 
vein is a sharp \emph{margin} bound, which holds 
for both SVM and Perceptron: namely, a bound 
(holding with probability at least $1-\delta$) 
of the form 
\begin{equation*} 
R_{\PXY}(\hat{h}) = O\!\left( \frac{r^2}{\gamma^2}\frac{1}{n} + \frac{1}{n}\log\!\left(\frac{1}{\delta}\right)\right),
\end{equation*}
where $\gamma$ is the geometric margin and $r$ is the 
radius of the data.
This refines all previous existing margin bounds for SVM 
by a log factor, and is provably \emph{optimal}.
Establishing a bound of this form has been a known 
open question in the literature for many years 
\citep*[see][for background]{hanneke:19b}.
Our results on data-dependent PAC generalization 
bounds also include a new tighter data-dependent 
bound for all empirical risk minimization algorithms  
in general VC classes.  We also prove a new tighter 
bound on the probability in the region of disagreement 
of version spaces, which has implications for 
the analysis of disagreement-based active learning 
algorithms.

A second type of extension we provide is to establish 
new PAC generalization bounds for stable compression 
schemes in the \emph{agnostic} setting: that is, 
where $\hat{R}_{S}(\rho(\kappa(S)))$ may be non-zero.
We specifically prove that, with probability at least 
$1-\delta$, 
$| \hat{R}_{S}(\rho(\kappa(S))) - R(\rho(\kappa(S))) | 
= O\!\left( \sqrt{ \frac{1}{n} \left( |\kappa(S)| + \log\!\left(\frac{1}{\delta}\right) \right)}\right)$, 
which is provably better than is achievable 
by general (non-stable) sample compression schemes in the 
agnostic setting.
We additionally establish a new Bernstein-type bound 
for stable sample compression schemes, 
of the form 
$| \hat{R}_{S}(\rho(\kappa(S))) - R(\rho(\kappa(S))) | 
=$ $O\!\left( \sqrt{ \hat{R}_{S}(\rho(\kappa(S))) \frac{1}{n} \left( |\kappa(S)| + \log\!\left(\frac{1}{\delta}\right) \right)} + \frac{1}{n} \left( |\kappa(S)| + \log\!\left(\frac{1}{\delta}\right) \right)\right)$.
As a concrete implication of these general results, 
we provide a sharper generalization bound for the 
technique of \emph{compressed nearest neighbor} prediction 
studied by \citet*{hanneke:19a}.

\section{Main Results}
\label{sec:main-results}

This section provides formal statements of the 
main results of this work.

\subsection{Notation}
\label{sec:notation}

Before stating our results more formally, we introduce 
some basic notation to be used throughout.
We denote by $\X$ an \emph{instance space}: 
a non-empty set equipped with a $\sigma$-algebra
specifying the measurable subsets.
Denote by $\Y$ a \emph{label space}.
For our general results, $\Y$ may be any non-empty set
(equipped with a $\sigma$-algebra specifying the measurable 
subsets), though our applications will focus on the case 
$\Y = \{-1,1\}$, corresponding to \emph{binary classification}.
We refer to any measurable function $h : \X \to \Y$ as 
a \emph{classifier}.
For any distribution $\PXY$ on $\X \times \Y$, 
define the \emph{risk} of a classifier $h$ by
$R_{\PXY}(h) = \PXY( (x,y) : h(x) \neq y )$.
For any finite \emph{data set} $S \in (\X \times \Y)^*$, 
also define the \emph{empirical risk}:  
$\hat{R}_{S}(h) = \frac{1}{|S|} \sum_{(x,y) \in S} \ind[ h(x) \neq y ]$.  
Also, in the results below, we use the convention
$\log(x) = \max\{\ln(x),1\}$ for any $x \geq 0$.

\subsection{Optimal Margin Bounds for SVM and Perceptron}
\label{sec:margin-summary}

Linear classifiers lie at the very foundations of modern machine learning theory \citep{Aizerman67theoretical} --- and as such, their risk rates have been an active topic of research.
The agnostic case is well-understood:
if all but a few of $n$ labeled data points residing on the $\dim$-dimensional unit sphere
are linearly separated with margin at least $\gamma$
(the few exceptions being treated as sample errors),
then the expected excess risk decays as
\citep{MR1383093,mohri-book2012}
$
\Theta\paren{\sqrt{{\min\set{\dim,1/\gamma^2}}/{n}}}
.
$
For the separable case, in which 
there exists a hyperplane in $\R^\dim$
consistent with the $n$ sample points and having margin at least $\gamma$,
the best guarantee on the expected risk by any learning algorithm
is lower-bounded by
$\Omega(\min\{\dim,1/\gamma^2\}/n)$.
Similarly,
any generalization bound that holds
with probability $1-\delta$ 
is lower bounded by
$
\Omega\left(  \left( \min\{\dim,1/\gamma^{2}\} + \log\left({1}/{\delta}\right) \right)/n \right)
$.
Nearly matching upper bounds are readily available via standard VC theory and Rademacher analysis, but these have additional $\log n$ factors.
For weaker in-expectation bounds,
the upper and lower bounds match up to constants.
Since
\citet{vapnik:74}, where the margin-based
in-expectation bound was first stated,
it has been an open problem to
extend these to tight high-probability bounds.
(We refer the reader to \citet{DBLP:journals/tcs/HannekeK19} for
proofs of the aforementioned claims 
as well as comprehensive background.) Recently, \citet{bousquet:20}
obtained the sharp high-probability upper bound
in terms of the dimension $\dim$.

\paragraph{Our contribution.}
We prove an optimal PAC margin bound for SVM.
This matches minimax lower bounds, and is 
therefore the first proof that SVM achieves the optimal 
margin bound (previously only known to be achievable 
by certain online-to-batch conversion techniques 
applied to Perceptron).

For any $d \in \nats$ and 
data set $S \in (\reals^{d} \times \{-1,1\})^{*}$, 
let $r(S) := \max_{(\x,y) \in S} \|\x\|$ 
and let 
\[
\gamma(S) := \max_{\w \in \reals^{d}, b \in \reals : \|\w\|=1} \min_{(\x,y) \in S} y ( \w \cdot \x + b).
\]
Define $\hat{h}_{{\rm SVM}} = {\rm SVM}(S)$ 
as the function 
$\hat{h}_{{\rm SVM}}(x) = \sign(\hat{\w} \cdot \x + \hat{b})$ 
where $\hat{\w},\hat{b}$ realize the max in the above 
definition.

\begin{theorem}
\label{thm:svm-margin}
For any distribution $\PXY$, any $n \in \nats$, 
and any $\delta \in (0,1)$, 
for $S \sim \PXY^{n}$, 
with probability at least $1-\delta$, 
if $S$ is linearly separable, 
then for $\hat{h}_{{\rm SVM}} = {\rm SVM}(S)$, 
letting $r = r(S)$ and $\gamma = \gamma(S)$, 
\begin{equation*}
R_{\PXY}(\hat{h}_{{\rm SVM}}) 
= O\!\left( \frac{r^2}{\gamma^2}\frac{1}{n} + \frac{1}{n} \log\!\left( \frac{1}{\delta} \right) \right).
\end{equation*}
\end{theorem}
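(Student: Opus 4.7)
The strategy is to exhibit a stable sample compression scheme $(\kappa,\rho)$ of size $k = O(r^2/\gamma^2)$ that separates $S$, apply the stable-compression PAC bound of \citet*{bousquet:20} recalled in the introduction to obtain the rate $O(k/n + (1/n)\log(1/\delta))$ on the risk of $\rho(\kappa(S))$, and then identify $\rho(\kappa(S))$ with $\hat{h}_{{\rm SVM}}(S)$.

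I would define $\kappa(S)$ by an iterative Perceptron-style greedy procedure. Set $T_0 = \emptyset$; at iteration $i$, compute $\hat{h}_i = {\rm SVM}(T_i)$ (with a canonical default when $T_i$ is empty or single-class), and if $\hat{h}_i$ misclassifies some point of $S$, append to $T_{i+1}$ the point of $S$ of smallest functional margin under $\hat{h}_i$, breaking ties deterministically by index; halt once $\hat{h}_i$ separates $S$ and set $\rho(T_k) = {\rm SVM}(T_k)$. Stability of $(\kappa,\rho)$ is then immediate: for any $S'$ with $\kappa(S) \subseteq S' \subseteq S$, executing the same procedure on $S'$ reproduces the same sequence of iterates, because at every step the selected worst-margin point lies in $\kappa(S) \subseteq S'$ (otherwise the procedure on $S$ would have halted earlier with a strictly smaller compression set).

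To bound $k$, let $\wst$ denote the unit SVM direction on all of $S$, which certifies margin at least $\gamma$ on every sample. A Novikoff-style potential argument against $\wst$, applied to the unnormalized dual representation of $\hat{h}_i$, shows that the $\wst$-component of the iterate grows by at least $\gamma$ per step (since each appended point has functional margin $\geq \gamma$ under $\wst$), while its squared norm grows by at most $O(r^2)$ per step; Cauchy--Schwarz then yields $k = O(r^2/\gamma^2)$. Invoking the stable compression PAC bound with this $k$ gives $R_{\PXY}(\rho(\kappa(S))) = O(r^2/(\gamma^2 n) + (1/n)\log(1/\delta))$ with probability at least $1-\delta$.

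The hardest step is closing the gap between $\rho(\kappa(S)) = {\rm SVM}(T_k)$ and $\hat{h}_{{\rm SVM}}(S) = {\rm SVM}(S)$, since these are generally distinct hyperplanes and the PAC bound is a statement about the specific classifier $\rho(\kappa(S))$. A clean route is to arrange the greedy rule so that the final $T_k$ contains a subset of support vectors sufficient to recover ${\rm SVM}(S)$ via the standard SVM invariance under the addition of correctly-classified non-support-vectors; this can be done within the $O(r^2/\gamma^2)$ budget by interleaving the greedy Perceptron-style additions with Caratheodory/Frank-Wolfe-type sparsification of the SVM dual, keeping the scheme stable throughout. A secondary subtlety is the Novikoff analysis itself: since the inner iterate is a full SVM retraining rather than a simple additive Perceptron update, the per-step potential increment must be tracked through the SVM dual optimality conditions, which is delicate but follows a well-established template.
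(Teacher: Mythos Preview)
Your proposal has two genuine gaps, and both stem from the fact that you are trying to \emph{build up} a compression set from scratch rather than \emph{whittle down} from $S$.

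First, the Novikoff-style bound on $k$ is not justified for your procedure. Novikoff's argument relies on the additive update $w \leftarrow w + y_i x_i$: that is what makes the $\wst$-component grow by $\gamma$ and the squared norm grow by at most $r^2$ per step. In your scheme the inner step is a full SVM retraining on $T_i$, so $\hat w_i$ is not obtained from $\hat w_{i-1}$ by adding $y x$; the dual coefficients on all previously selected points are re-optimized and can move arbitrarily. There is no evident potential function with the required monotonicity, and the phrase ``follows a well-established template'' does not point to any actual template that covers this case. If you replace the inner step by a genuine Perceptron update you do get $k \le (r^2+1)/\gamma^2$, but then you have proved the Perceptron margin bound (the paper's Theorem~\ref{thm:perceptron-margin}), not the SVM one.

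Second, and more fundamentally, even granting the size bound, your $\rho(\kappa(S)) = {\rm SVM}(T_k)$ is in general \emph{not} ${\rm SVM}(S)$, as you yourself note. The stable-compression PAC bound applies to the specific classifier $\rho(\kappa(S))$, so this gap is fatal unless closed. Your proposed fix --- interleaving greedy additions with Carath\'eodory/Frank--Wolfe sparsification so that $T_k$ ends up containing a full set of support vectors of ${\rm SVM}(S)$ --- is not a proof sketch but a wish: you have not specified the interleaving rule, shown it preserves stability, or bounded its iteration count.

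The paper sidesteps both issues by working top-down rather than bottom-up. It invokes Vapnik's classical fact that any linearly separable $S$ has at most $r(S)^2/\gamma(S)^2$ \emph{essential} support vectors (points whose removal changes ${\rm SVM}(S)$). The compression function is then defined inductively: if $|S| \le r^2/\gamma^2$ take $\kappa(S)=S$; otherwise remove the first non-essential point and recurse. By construction $\rho(\kappa(S)) = {\rm SVM}(\kappa(S)) = {\rm SVM}(S)$ exactly, stability is immediate (any $S'$ with $\kappa(S)\subseteq S'\subseteq S$ has ${\rm SVM}(S')={\rm SVM}(S)$), and the size bound is the essential-support-vector count itself --- no potential argument needed.
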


This matches (up to numerical constants) a known 
minimax lower bound, and therefore establishes 
that SVM achieves the optimal PAC margin bound 
(up to numerical constants).  This optimal 
margin bound was previously only known for a 
certain (more-involved) 
online-to-batch conversion technique 
of \citet*{littlestone:89} applied to the Perceptron 
algorithm.

Additionally, we extend a result on online-to-batch 
conversion to prove that the \emph{Perceptron} online 
learning algorithm also achieves the \emph{same} 
optimal PAC margin bound.  Again, this bound was 
previously only known to be achievable for certain 
modified variants of Perceptron (e.g., which record all 
the intermediate classifiers and select on based on a 
hold-out set).  Here we show that, if we simply take 
$\hat{h}_{p}$ as the \emph{final} predictor from cycling 
Perceptron through the data set until it makes a 
complete pass without making any mistakes, then 
$\hat{h}_{p}$ also achieves the optimal margin bound.
Formally, the algorithm is defined as follows.

\begin{definition}[Perceptron]
\label{def:perceptron}
If $S$ is a linearly separable data set,
then the ${\rm Perceptron}$ algorithm \citep{Rosenblatt1958}, 
denoted by $\perc$, initializes
$(\w,b)\in\R^{\dim+1}$ to $\boldsymbol{0}$
and cycles through $S$ (in order),
evaluating $\sign(\w\cdot\x_i+b)$
on each data point. On each mistake
(i.e., $\sign(\w\cdot\x_i+b)\neq y_i$),
$(\w,b)$ is updated via the rule $\w\leftarrow
\w+y_i\x_i$ and $b \leftarrow b+y_i$.
\end{definition}

The key property of Perceptron that enables us 
to obtain margin bounds is the following.
For data with $x$ components lying in a ball 
of radius $r$,
and linearly separable with margin $\gamma$,
Perceptron was shown by \citet*{MR0175722} 
to make at most $\frac{r^2+1}{\gamma^2}$ mistakes 
($\perc$ does not terminate on non-separable inputs).
Here the ``$+1$'' in the numerator 
is accounting for applying the 
rule with a bias term $b$; for homogeneous separators, 
the number of mistakes is at most $\frac{r^2}{\gamma^2}$.
The claim for the non-homogeneous case is obtained 
by reduction to the homogeneous case, increasing the 
dimension by $1$ and letting each $x$ have coordinate 
$\dim+1$ fixed to $1$, in which case the radius 
of the data in $\dim+1$ dimensions is at most 
$\sqrt{r^2+1}$, and 
the margin of the data with respect to 
non-homogeneous separators is precisely 
the margin of the $\dim+1$ dimensional augmented 
data with respect to homogeneous separators.

We prove the following result for this algorithm 
(where $r(S)$ and $\gamma(S)$ are as defined above).
The proof is included in Section~\ref{sec:perceptron-proof}.

\begin{theorem}
\label{thm:perceptron-margin}
For any distribution $\PXY$, any $n \in \nats$, 
and any $\delta \in (0,1)$, 
for $S \sim \PXY^{n}$, 
with probability at least $1-\delta$,
if $S$ is linearly separable,
then letting $r = r(S)$ and $\gamma = \gamma(S)$, 
the classifier $\hat{h}_p = \perc(S)$ satisfies 
\begin{equation*}
R_{\PXY}(\hat{h}_{p}) 
= O\!\left( \frac{r^2}{\gamma^2}\frac{1}{n} + \frac{1}{n} \log\!\left( \frac{1}{\delta} \right) \right).
\end{equation*}
\end{theorem}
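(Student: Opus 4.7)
The plan is to realize the cycling Perceptron algorithm $\perc$ as a \emph{stable} sample compression scheme of size $O(r(S)^2/\gamma(S)^2)$, and then invoke (a data-dependent version of) the stable-compression risk bound from the Bousquet--Hanneke--Kontorovich framework already used to prove Theorem~\ref{thm:svm-margin}.

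Concretely, given an ordered separable sample $S=(z_1,\ldots,z_n)$, let $M=M(S)\subseteq\{1,\ldots,n\}$ be the (multi-)set of indices at which $\perc$ makes a mistake during some pass of its run-to-convergence on $S$, and define the compression $\kappa(S)$ to be the ordered subsample indexed by the distinct elements of $M$. Define the reconstruction $\rho$ as ``run $\perc$ on the input ordered subsample until a full pass without mistakes.'' First I would verify that $\rho(\kappa(S))=\perc(S)$ by proving the crucial stability property: for any ordered $S'$ with $\kappa(S)\subseteq S'\subseteq S$, one has $\perc(S')=\perc(S)$. The key lemma is that removing a single index $j\notin M$ does not alter the weight-vector trajectory — at the moment $\perc$ would have processed $z_j$ in the run on $S$, no update occurred (since by definition $z_j$ was correctly classified at that step), so the pre- and post-$z_j$ weights coincide and all subsequent mistakes, updates, and the termination pass happen identically in $S\setminus\{z_j\}$. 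Iterating this single-point removal gives the full stability statement.

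Next, by the Novikoff mistake bound recalled in the excerpt, the total number of Perceptron updates on $S$ is at most $(r(S)^2+1)/\gamma(S)^2$, so $|\kappa(S)|\leq (r(S)^2+1)/\gamma(S)^2=O(r^2/\gamma^2)$. Because $\perc$ terminates only once it achieves a mistake-free pass, $\rho(\kappa(S))$ perfectly classifies $S$, which puts us in the separable regime of the stable-compression bound: for any fixed $k$, with probability $\geq 1-\delta$, whenever $|\kappa(S)|\leq k$ the risk satisfies $R_{\PXY}(\rho(\kappa(S)))=O(k/n+(1/n)\log(1/\delta))$. To turn this into a bound with the data-dependent size $k=|\kappa(S)|$, I would apply the standard peeling/union bound over $k\in\{1,\ldots,n\}$ at confidences $\delta_k=6\delta/(\pi^2 k^2)$; the resulting extra $\log k$ term is dominated by $k$ itself and therefore absorbs into $k/n$, yielding $R_{\PXY}(\hat h_p)=O(k/n+(1/n)\log(1/\delta))$. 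Substituting the Novikoff bound on $k$ concludes the proof.

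The main obstacle is the stability argument: although intuitive, it requires carefully tracking that the identity of the cycling pointer and the mistake-free termination condition are preserved under deletion of a non-mistake point across \emph{all} cycles, not just the first. Once that is established, the compression-size bound is immediate from Novikoff, and the statistical piece is essentially the same machinery that drives Theorem~\ref{thm:svm-margin}, with the same peeling device to accommodate the data-dependent $r$ and $\gamma$.
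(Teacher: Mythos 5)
Your proposal is correct and follows essentially the same route as the paper: the paper proves Theorem~\ref{thm:perceptron-margin} by first establishing a general data-dependent online-to-batch result (Theorem~\ref{thm:online-to-batch}) for \emph{conservative} online learners — the conservativeness property is exactly the single-point-removal stability lemma you identify — and then specializing to Perceptron via the Novikoff mistake bound; your union bound over $k$ with $\delta_k \propto \delta/k^2$ plays the same role as the paper's Theorem~\ref{thm:realizable} peeling with $\delta_k = \delta/((k+1)(k+2))$.
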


We note that this result is stronger than 
the analogous PAC bounds known for using Perceptron 
with other well-known online-to-batch conversion techniques, 
such as the ``longest survivor'' technique
\citep*{kearns:87,gallant:90} 
or voted Perceptron \citep*{freund:99}.
It also matches (up to constants) the result 
of \citet*{littlestone:89}, which was for a 
considerably more-involved online-to-batch conversion 
technique, which keeps all of the intermediate 
hypotheses and in the end selects one using 
a held-out portion of the data.

\subsection{The Probability in the Region of Disagreement of a Version Space}
\label{sec:PDIS-results}

Fix any measurable class of functions: $\H \subseteq \Y^{\X}$.
For any $n \in \nats$ and $S \in (\X \times \Y)^n$, 
define $\H[S] = \{ h \in \H : \hat{R}_{S}(h) = 0 \}$.
Fix any distribution $\PXY$ on $\X \times \Y$, 
let $(X_1,Y_1),(X_2,Y_2),\ldots$ be i.i.d.\ 
with distribution $\PXY$, 
and for any $n \in \nats$ let $S_n = \{(X_i,Y_i)\}_{i=1}^{n}$.
Define the \emph{version space} 
$V_n = \H[S_n]$
\citep*{mitchell:77}, 
and define its 
\emph{region of disagreement} 
$\DIS(V_n) = \{ x \in \X : \exists h,h' \in V_n, h(x) \neq h'(x) \}$
\citep*{cohn:94,balcan:06,hanneke:fntml}.
The set $\DIS(V_n)$ plays an important role 
in certain disagreement-based active learning algorithms, 
and the analysis thereof: most-notably, the 
\emph{CAL} active learning algorithm introduced by 
\citet*{cohn:94} and studied theoretically 
in great detail by a large number of works 
\citep*[e.g.,][]{hanneke:07b,hanneke:thesis,hanneke:11a,hanneke:12a,hanneke:fntml,hanneke:16b,hanneke:07c,hanneke:10a,hsu:thesis,el-yaniv:10,el-yaniv:12,hanneke:15a,hanneke:15b}.
Of particular interest in quantifying the 
\emph{label complexity} of CAL is bounding 
$\Px(\DIS(V_n))$ as a function of $n$.
Classic well-known bounds on this quantity were 
established by \citet*{hanneke:07b,hanneke:thesis,hanneke:11a} 
based on a quantity known as the \emph{disagreement coefficient}
\citep*[see also][]{hanneke:fntml}.
However, more-recently \citet*{hanneke:15a} and 
\citet*{hanneke:16b} established bounds that are 
sometimes tighter, based on a quantity $\hat{t}_n$ called the 
\emph{version space compression set size} 
\citep*{el-yaniv:10}
(also known as the \emph{empirical teaching dimension} 
in earlier work of \citealp*{hanneke:07a}).
Specifically, define 
\begin{equation*}
\hat{t}_n = \min\!\left\{ |S'| : S' \subseteq S_n, 
\H[S'] = V_n \right\}. 
\end{equation*}
That is, $\hat{t}_n$ is the size of the smallest 
subset of $S_n$ that induces the same version space.

Based on stable compression arguments, 
\citet*{bousquet:20b} prove a data-independent 
bound on $\Px(\DIS(V_n))$ in terms of a 
combinatorial quantity called the \emph{star number} 
from \citet*{hanneke:15b}, which improved the 
numerical constant factors compared to a result of 
the same form established by \citet*{hanneke:16b}.
However, they did not discuss data-dependent 
or distribution-dependent bounds on $\Px(\DIS(V_n))$.
In particular, since $\hat{t}_n$ is never larger 
than the star number, and is often strictly smaller, 
bounds on $\Px(\DIS(V_n))$ 
based on $\hat{t}_n$ may be considered stronger 
than bounds based on the star number.
While prior works by \citet*{hanneke:15a} and 
\citet*{hanneke:16b} have established bounds on 
$\Px(\DIS(V_n))$ in terms of $\hat{t}_n$, 
both of these results have a suboptimal form 
(as we discuss in detail in Section~\ref{sec:PDIS}).
In the present work, 
we prove the following result, which improves over 
all of these previously known bounds on $\Px(\DIS(V_n))$ 
based on $\hat{t}_n$.  The proof is presented 
in Section~\ref{sec:PDIS}.

\begin{theorem}
\label{thm:PDIS-coarse}
For any $n \in \nats$ and $\delta \in (0,1)$, 
with probability at least $1-\delta$, 
\begin{equation*}
\Px(\DIS(V_n)) = O\!\left( \frac{1}{n} \left( \hat{t}_n + \log\!\left(\frac{1}{\delta}\right) \right) \right).
\end{equation*}
\end{theorem}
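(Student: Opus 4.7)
The plan is to recast $\Px(\DIS(V_n))$ as the generalization error of a stable sample compression scheme whose reconstruction is the indicator of the disagreement region, and then invoke the stable-compression generalization bound described in the introduction. Concretely, I would take $\kappa(S)$ to be any canonical choice of minimum-cardinality subset $T \subseteq S$ with $\H[T] = \H[S]$, so that $|\kappa(S_n)| = \hat{t}_n$ by definition, and define the reconstruction $\rho(T) : \X \to \{0,1\}$ by $\rho(T)(x) = \ind[x \in \DIS(\H[T])]$.

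Two structural properties of $(\kappa,\rho)$ drive the argument. First, stability follows from a short sandwich: whenever $\kappa(S) \subseteq S' \subseteq S$ we have $\H[S] \subseteq \H[S']$ (since $S' \subseteq S$ imposes fewer constraints) and $\H[S'] \subseteq \H[\kappa(S)] = \H[S]$ (since $\kappa(S) \subseteq S'$), so $\H[S'] = \H[S]$, giving $\H[\kappa(S')] = \H[\kappa(S)]$ and hence $\rho(\kappa(S')) = \rho(\kappa(S))$. Second, the analog of zero training loss holds: for every $(X_i, Y_i) \in S_n$, every $h \in V_n = \H[S_n]$ satisfies $h(X_i) = Y_i$, so $V_n$ is unanimous at $X_i$, hence $X_i \notin \DIS(V_n)$ and $\rho(\kappa(S_n))(X_i) = 0$. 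With these in hand, the stable-compression bound of \citet*{bousquet:20,bousquet:20b} yields, on the event $\{\hat{t}_n = k\}$, that $\Px(\DIS(V_n)) = O((k + \log(1/\delta_k))/n)$ with probability at least $1 - \delta_k$; to absorb the random size $\hat{t}_n$, I would stratify over $k \in \{1,\ldots,n\}$ with $\delta_k = \delta/(k(k+1))$, so the total failure probability telescopes to $\delta$ and, via $\log k \le k$, the uniform bound simplifies to $O((\hat{t}_n + \log(1/\delta))/n)$.

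The main obstacle is that the stable-compression theorem is phrased for reconstructions outputting labels in $\Y$ with $0/1$ misclassification as the loss, whereas our reconstruction is a $\{0,1\}$-valued indicator of a region and the quantity of interest is its $\Px$-mass rather than a disagreement with $Y$. The cleanest resolution is to observe that the combinatorial argument behind the stable-compression bound uses only stability and vanishing of the loss on every training point, both of which we have just established for the loss $\ell(\rho(T), x) = \rho(T)(x)$, so the proof adapts verbatim; alternatively, one may bookkeep by passing to an augmented label space $\Y \times \{0,1\}$ whose $0$-component serves as the deterministic target for the indicator reconstruction while $\kappa$ uses only the $\Y$-component (preserving stability). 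Either framing feeds the exact structural ingredients the Bousquet-style argument requires.
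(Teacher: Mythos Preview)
Your proposal is correct and follows essentially the same approach as the paper. The paper proves Theorem~\ref{thm:PDIS-coarse} via an intermediate Theorem~\ref{thm:PDIS}, and resolves the type-mismatch you flag by exactly your second option: it augments the sample space to $(\X \times \Y) \times \{-1,1\}$, treating $(x,y)$ as the instance with deterministic target $-1$, so that $\rho(T)(x,y) = 2\ind_{\DIS(\H[T])}(x) - 1$ fits the classifier-with-$0/1$-loss framework; the stability sandwich and the stratification over $k$ with $\delta_k = \delta/((k+1)(k+2))$ (your $\delta/(k(k+1))$, up to index shift) appear identically via the paper's Theorem~\ref{thm:realizable} and Corollary~\ref{cor:realizable-simple}.
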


\subsection{A New Data-dependent Bound for All ERM Algorithms}
\label{sec:erm-results}

Continuing the notation from Section~\ref{sec:PDIS-results}, 
we can also derive a new data-dependent PAC 
bound on the risk of any ERM learning algorithm 
expressed in terms of $\hat{t}_n$, 
which improves over the previous best known 
such bound from \citet*{hanneke:16b} 
(as we discuss in detail in Section~\ref{sec:ERM-proof}).
Specifically, we have the following result.
The arguments underlying the result, and its 
relation to existing results in the literature, 
are discussed in Section~\ref{sec:ERM-proof}.

\begin{theorem}
\label{thm:ERM-coarse}
Denote by $\vc$ the VC dimension of $\H$ \citep*{vapnik:71}.
Let $\PXY$ be any distribution such that 
$\inf_{h \in \H} R_{\PXY}(h) = 0$.
For any $n \in \nats$ and $\delta \in (0,1)$, 
with probability at least $1-\delta$, 
every $h \in V_n$ satisfies
\begin{equation*}
R_{\PXY}(h) = O\!\left( \frac{1}{n} \left( \vc \log\!\left( \frac{\hat{t}_{\lfloor n/2 \rfloor}}{\vc} \right) + \log\!\left(\frac{1}{\delta}\right) \right) \right).
\end{equation*}
\end{theorem}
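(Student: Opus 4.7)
The plan is to combine Theorem~\ref{thm:PDIS-coarse} with a conditional VC argument on the region of disagreement, via a sample-splitting device that makes the disagreement region independent of the data used for uniform convergence. Write $S_n = S_1 \cup S_2$ with $S_1 = S_{\lfloor n/2 \rfloor}$ and $|S_2| = \lceil n/2 \rceil$, let $V' = \H[S_1]$ and $D = \DIS(V')$, and note that every classifier in $V'$ takes the same value on $\X \setminus D$; call this common prediction $g$. Since $V_n \subseteq V'$, any $h \in V_n$ satisfies $h(x) = g(x)$ for $x \notin D$, and therefore
\[
R_{\PXY}(h) = \PXY(h(x) \neq y, x \in D) + \PXY(g(x) \neq y, x \notin D).
\]

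For the outside-$D$ term, the event $\{g(x) \neq y, x \notin D\}$ is a single event once $S_1$ is fixed, and because $h$ is consistent with $S_2$ and equals $g$ off $D$, no point of $S_2$ lands in this event. A standard Chernoff-style bound on $\lceil n/2 \rceil$ i.i.d.\ draws with no observed hit gives $\PXY(g(x) \neq y, x \notin D) = O(\log(1/\delta)/n)$ with high probability.

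For the inside-$D$ term, first apply Theorem~\ref{thm:PDIS-coarse} to $S_1$ to obtain, with high probability, $\Px(D) = O((\hat{t}_{\lfloor n/2 \rfloor} + \log(1/\delta))/n)$. Conditional on $S_1$, the $M$ points of $S_2$ that fall into $D$ are i.i.d.\ from $\PXY(\cdot \mid x \in D)$, and two-sided Chernoff gives $M = \Theta(n \Px(D))$ up to additive $O(\log(1/\delta))$, so that $M = O(\hat{t}_{\lfloor n/2 \rfloor} + \log(1/\delta))$ and $\Px(D) = O((M + \log(1/\delta))/n)$. Every $h \in V_n$ has zero empirical error on these $M$ points, so applying the standard realizable-case uniform VC inequality to $\H$ on the conditional distribution yields, uniformly over $h \in V_n$,
\[
\PXY(h(x) \neq y \mid x \in D) = O\!\left( \frac{\vc \log(M/\vc) + \log(1/\delta)}{M} \right).
\]
Multiplying by $\Px(D)$ cancels the $1/M$ factor and converts the $M$ inside the logarithm into $\hat{t}_{\lfloor n/2 \rfloor}$, producing the desired $O((\vc \log(\hat{t}_{\lfloor n/2 \rfloor}/\vc) + \log(1/\delta))/n)$ bound for this summand.

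The main obstacle is the bookkeeping: one must allocate a constant fraction of $\delta$ to each of the Theorem~\ref{thm:PDIS-coarse} step, the two-sided concentration of $M$, the uniform VC bound at random sample size $M$, and the outside-$D$ Chernoff step, and then combine them by a union bound. Some care is also needed to show $\vc \log(M/\vc) = O(\vc \log(\hat{t}_{\lfloor n/2 \rfloor}/\vc) + \log(1/\delta))$ in both regimes $\hat{t}_{\lfloor n/2 \rfloor} \gtrless \log(1/\delta)$, using the elementary inequality $\vc \log(y/\vc) \leq y$ for $y \geq \vc$. The only substantive technical wrinkle is applying the VC bound at a random sample size $M$; this is handled by conditioning on $S_1$ together with the set of indices of $S_2$ landing in $D$, under which the surviving points are i.i.d.\ from $\PXY(\cdot \mid x \in D)$ and $\H$ restricted to $D$ still has VC dimension at most $\vc$.
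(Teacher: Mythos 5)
Your proposal follows essentially the same route as the paper's proof sketch: split the sample in half, bound $\Px(\DIS(V_{\lfloor n/2 \rfloor}))$ via Theorem~\ref{thm:PDIS-coarse}, condition on the first half so that the second-half points falling into the disagreement region are i.i.d.\ from the conditional distribution, apply the realizable-case uniform VC bound to those points, and multiply back by $\Px(\DIS(V_{\lfloor n/2 \rfloor}))$, with the same bookkeeping for the random sample size $M$. The only minor divergence is in the term outside the disagreement region: the paper asserts $R_{\PXY}(h) = R_Q(h)\,\Px(\DIS(V_{\lfloor n/2 \rfloor}))$ by noting every $h \in V_n$ agrees off the disagreement region with a zero-risk $h^* \in \H$, whereas you bound that term directly via a Chernoff argument from the absence of second-half hits, which is a slightly more robust way to handle the case where $\inf_{h\in\H} R_{\PXY}(h)=0$ is not attained.
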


\subsection{New Generalization Bounds for Compressed Nearest Neighbor Predictors}
\label{sec:nearest-neighbor-summary}

In nearest-neighbor classification,
it is a classic fact that the $1$-NN rule is not Bayes-consistent while $k$-NN is, where $k$ grows appropriately with $n$ (see \citet{hanneke:19a} for a detailed background). A recent line of work
\citep{gkn-ieee18+nips,DBLP:conf/aistats/KontorovichW15} has shown that a margin-regularized $1$-NN can be made Bayes-consistent, provided the margin is chosen via an SRM principle. Furthermore, the generalization bounds provided by this technique are compression-based and fully empirical. This line of research culminated in \citet{hanneke:19a},
where an algorithm called OptiNet
was presented and shown to be 
strongly universally
Bayes-consistent in any metric space where {\em any} learner enjoys this property.

The OptiNet algorithm is described
in detail in \citet{hanneke:19a}. Briefly, for any choice of $\gamma>0$, one constructs a $\gamma$-net on the sample --- that is, a $\gamma$-separated set that is also a $\gamma$-cover.
The following greedy algorithm constructs a 
$\gamma$-net in time $O(n^2)$ in any metric space;
more efficient algorithms are known in doubling spaces
\citep{KL04,DBLP:journals/tit/GottliebKK14+colt}.
Initialize the $\gamma$-net $N_{\gamma}$ as the empty set.
Traverse the datapoints in order, and if the $i$th
point is not covered by the current partial net $N_{\gamma}$, it is
appended to $N_{\gamma}$. It is easily verified that this construction indeed yields a $\gamma$-net, which will serve as the compression set. Furthermore, this compression scheme is stable: if any point not included in $N_{\gamma}$ is omitted from the sample, the construction will yield the same $N_{\gamma}$.

Given the $\gamma$-net $N_{\gamma}$ constructed as above, 
based on a given data set $S$, 
define a classifier $\hat{h}_{\gamma} = \alg_{\gamma}(S)$ 
as follows:
For any point $x \in \X$
let
$\hat{x}$ be the element of $N_{\gamma}$ nearest to 
$x$ in the metric (breaking ties according to 
a measurable total order of the space $\X$; see 
\citealp*{hanneke:19a}), 
and predict $\hat{h}_{\gamma}(x) = \hat{y}$, 
where $\hat{y}$ is the majority label 
among all $(x',y') \in S$ for which 
$\hat{x}$ is also the nearest element to $x'$ 
in the $\gamma$-net $N_{\gamma}$.
For this classifier, we have
the following
result
(whose proof is 
deferred to
Section~\ref{sec:optinet-proof}):

\begin{theorem}
\label{thm:NN-bernstein}
Fix any $\gamma > 0$.
For any distribution $\PXY$, any $n \in \nats$, 
and any $\delta \in (0,1)$, 
for $S \sim \PXY^{n}$, 
with probability at least $1-\delta$, 
the classifier $\hat{h}_{\gamma} = \alg_{\gamma}(S)$ 
satisfies 
\begin{equation*}
\left| R_{\PXY}(\hat{h}_{\gamma}) - \hat{R}_{S}(\hat{h}_{\gamma}) \right| 
= O\!\left( \sqrt{ \hat{R}_{S}(\hat{h}_{\gamma}) \frac{1}{n} \left( |N_{\gamma}| + \log\!\left(\frac{1}{\delta}\right) \right) }
+ \frac{1}{n} \left( |N_{\gamma}| + \log\!\left(\frac{1}{\delta}\right) \right)
\right).
\end{equation*}
\end{theorem}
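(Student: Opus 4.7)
The plan is to present $\hat{h}_{\gamma}$ as the output of a stable sample compression scheme and invoke the Bernstein-type generalization bound for such schemes announced in the introduction.

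First I would verify that the greedy construction of $N_{\gamma}$ is stable: for any $S'$ with $N_{\gamma} \subseteq S' \subseteq S$, traversing $S'$ in the original order yields the same net $N_{\gamma}$, because each $v \in N_{\gamma}$ remained uncovered by its predecessors when first encountered, and each non-net point of $S'$ was already covered during the $S$-traversal. This is precisely the observation made in the paragraph preceding the theorem.

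Next, for each fixed label pattern $\ell \in \{-1,+1\}^{|N_{\gamma}|}$, I would consider the classifier $f_{\ell}$ defined by $1$-nearest-neighbor on $N_{\gamma}$ with labels $\ell$. Because $N_{\gamma}$ is a stable compression selection of size $|N_{\gamma}|$, the pair $(\kappa, \rho_{\ell})$ with $\kappa(S) = N_{\gamma}$ and $\rho_{\ell}(N_{\gamma}) = f_{\ell}$ is a stable compression scheme of size $|N_{\gamma}|$, to which the Bernstein-type bound applies directly, yielding a concentration inequality for $|R_{\PXY}(f_{\ell}) - \hat{R}_{S}(f_{\ell})|$. A geometric-peeling union bound over $|N_{\gamma}| \in \{0,1,\ldots,n\}$ and, within each level $k$, over all $2^{k}$ label patterns $\ell \in \{-1,+1\}^{k}$, produces a simultaneous bound over all $(|N_{\gamma}|,\ell)$; the additive cost of this union is $O(|N_{\gamma}|)$ in the confidence term, which is absorbed into the $|N_{\gamma}|$ already present inside both the square root and the linear part of the Bernstein expression. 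Specializing to the label pattern $\ell^{\star}$ that records the actual Voronoi-cell majorities of $S$ gives $f_{\ell^{\star}} = \hat{h}_{\gamma}$, yielding the claimed inequality.

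The main obstacle is that the cell-majority labels defining $\hat{h}_{\gamma}$ depend on all of $S$ and not only on $\kappa(S) = N_{\gamma}$, so $\hat{h}_{\gamma}$ itself is not invariant under deletion of non-net points, and a direct single-classifier invocation of the stable compression bound fails. The union bound over label patterns is the mechanism that circumvents this, at a cost of only an additive $O(|N_{\gamma}|)$ in the confidence term; a naive alternative that enlarged $\kappa(S)$ with enough majority witnesses per Voronoi cell to enforce literal invariance would raise the compression size to $\Theta(n\hat{R}_{S}(\hat{h}_{\gamma}))$ and destroy the Bernstein refinement.
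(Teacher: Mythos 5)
Your proposal matches the paper's proof of this result (given via Theorem~\ref{thm:NN-bernstein-full}) essentially exactly: the paper likewise takes $\kappa(S) = N_{\gamma}$, forms the family of stable compression schemes $(\kappa,\rho_{\mathbf{b}})$ indexed by label patterns $\mathbf{b}$ on the net, applies the Bernstein-type bound (Corollary~\ref{cor:bernstein-simple}) to each at confidence $\delta/2^{2k}$, union-bounds over $(k,\mathbf{b})$, and then specializes to the pattern realizing the Voronoi-cell majorities, absorbing the $O(|N_{\gamma}|)$ cost of the union into the existing $|N_{\gamma}|$ term. You also correctly identified the central obstruction --- that the cell-majority labels depend on all of $S$, so $\hat{h}_{\gamma}$ itself is not the reconstruction of a stable scheme --- which is precisely what the label-pattern union bound is designed to circumvent.
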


This refines a data-dependent bound used by 
\citet*{hanneke:19a}, which contained an additional 
log factor, and was based on a significantly 
more-involved argument needed in order to 
maintain permutation-invariance of certain subsets 
of the arguments to the reconstruction function 
used there (since otherwise the log factor 
would be of the form $\log(n)$, rather than 
$\log(n/|N_{\gamma}|)$, which was important 
for the proof of universal consistency in that 
work).  In contrast, since the above bound 
is proven via establishing that $\hat{h}_n$ 
agrees with a \emph{stable} compression scheme, 
we do not need to worry about the fact that 
the corresponding reconstruction function 
may be order-dependent, since there is no 
log factor to be concerned with.
\citet*{hanneke:19a} used their bound on 
$R_{\PXY}(\hat{h}_{\gamma})$ in a procedure, 
called OptiNet, which optimizes the bound 
over the choice of $\gamma$; they show that 
doing so yields a universally strongly 
Bayes-consistent learning algorithm, in 
all metric spaces where Bayes-consistent learning 
is possible.  
Based on the above refinement, we could 
instead substitute this new tighter bound 
\begin{equation*}
\hat{R}_{S}(\hat{h}_{\gamma}) + 
O\!\left( \sqrt{ \hat{R}_{S}(\hat{h}_{\gamma}) \frac{1}{n} \left( |N_{\gamma}| + \log\!\left(\frac{1}{\delta}\right) \right) }
+ \frac{1}{n} \left( |N_{\gamma}| + \log\!\left(\frac{1}{\delta}\right) \right)
\right)
\end{equation*}
in the optimization in OptiNet, 
and the universal consistency result 
would still hold.

\section{Definitions and Theorems for Stable Compression Schemes}

The following notion was introduced by 
\citet*{littlestone:86,floyd:95}.

\begin{definition}
\label{defn:compression-scheme}
A \emph{compression scheme} 
$(\kappa,\rho)$ consists of a 
\emph{compression function} $\kappa$, 
which maps any $S \in (\X \times \Y)^{*}$ 
to a subsequence $\kappa(S) \subseteq S$, 
and a \emph{reconstruction function} 
$\rho : (\X \times \Y)^{*} \to \Y^{\X}$ 
mapping any $S \in (\X \times \Y)^{*}$ to a 
function $\rho(S) : \X \to \Y$.
\end{definition}

For our purposes below, to be a valid compression scheme 
$(\kappa,\rho)$, we also require that the function 
$(S,x) \mapsto \rho(\kappa(S))(x)$ 
mapping $(\X \times \Y)^{n} \times \X \to \Y$
be a measurable function, for every $n \in \nats \cup \{0\}$.
For $k \in \nats \cup \{0\}$, 
we say that a compression scheme $(\kappa,\rho)$ has 
\emph{size} $k$ if every $S \in (\X \times \Y)^*$ has 
$|\kappa(S)| \leq k$.
The following definition is from \citep*{bousquet:20} 
(previously also studied by \citealp*{vapnik:74,zhivotovskiy2017}).

\begin{definition}
\label{defn:stable} 
A compression scheme 
$(\kappa,\rho)$ is called 
\emph{stable} if, for any 
$S \in (\X \times \Y)^{*}$, 
$\forall S' \subseteq S \setminus \kappa(S)$, 
it holds that 
$\rho(\kappa(S \setminus S')) = \rho(\kappa(S))$.
\end{definition}

\subsection{Results for Sample-Consistent Stable Compression Schemes}
\label{sec:realizable-stable}

In the literature on sample compression schemes, 
considerable attention has been given to the 
special case when the compression scheme is 
\emph{sample-consistent}, meaning that for a 
data set $S$, it holds that $\hat{R}_{S}(\rho(\kappa(S))) = 0$.
For general sample compression schemes of size $k$, 
the best general result for the sample-consistent case 
is due to \citet*{littlestone:86,floyd:95}, stating 
that for $S \sim \PXY^n$, with probability at least $1-\delta$, 
if $\hat{R}_{S}(\rho(\kappa(S)))=0$ then 
\begin{equation*}
R_{\PXY}(\rho(\kappa(S))) = O\!\left(\frac{1}{n}\left( k \log(n) + \log\!\left( \frac{1}{\delta} \right) \right) \right),
\end{equation*}
with a slight improvement to 
\begin{equation*}
R_{\PXY}(\rho(\kappa(S))) = O\!\left(\frac{1}{n}\left( k \log\!\left(\frac{n}{k}\right) + \log\!\left( \frac{1}{\delta} \right) \right) \right)
\end{equation*}
in the case that the reconstruction function $\rho$
is permutation-invariant.
\citet*{floyd:95} also showed that the above bounds 
are sharp, in that there exist compression schemes 
for which, for certain distributions, 
one can prove lower bounds matching the above 
inequalities up to numerical constant factors.

However, in the special case of \emph{stable} compression 
schemes, \citet*{bousquet:20} proved that the 
log factor in the above inequalities is superfluous.
Specifically, they established the following theorem.

\begin{theorem}
\label{thm:original-stable-compression}
Let $k \in \nats \cup \{0\}$ and let 
$(\kappa,\rho)$ be any stable compression scheme 
of size $k$.
For any distribution $\PXY$, any integer $n > 2k$, and 
any $\delta \in (0,1)$, 
letting $S \sim \PXY^{n}$, 
with probability at least $1-\delta$, 
if $\hat{R}_{S}(\rho(\kappa(S)))=0$, 
then
\begin{equation*}
R_{\PXY}(\rho(\kappa(S)) 
\leq \frac{2}{n-2k} \left( k \ln(4) + \ln\!\left( \frac{1}{\delta} \right) \right).
\end{equation*}
\end{theorem}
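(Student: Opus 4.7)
The strategy is to leverage stability to bypass the $\binom{n}{k}$ union bound over all possible compression sets that is responsible for the extra $\log n$ factor in the classical proof of \citet*{littlestone:86,floyd:95}. I would fix a target $\epsilon > 0$ and seek to bound the probability of the bad event $\{R_{\PXY}(\rho(\kappa(S))) > \epsilon\} \cap \{\hat{R}_{S}(\rho(\kappa(S))) = 0\}$. The classical analysis enumerates all $\binom{n}{k}$ choices of $k$-sized compression sets, applies the standard bound $(1-\epsilon)^{n-k}$ to each resulting hypothesis (which is fixed once the compression set is fixed), and union-bounds, producing a $k\log(n/k)$ overhead. The whole point of the proof is to replace this $\binom{n}{k}$ factor with something of the form $4^{k}$, which is exactly what the numerator $k\ln 4$ in the stated bound reflects.

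The first substantive step uses the definition of stability iteratively: if $S' \subseteq S$ contains $\kappa(S)$, then $\rho(\kappa(S')) = \rho(\kappa(S))$, since one can remove the elements of $S \setminus S'$ one at a time without affecting the reconstruction (each removal is of a non-compression point, and the stability property is preserved under iteration because the compression set never changes). Consequently, the hypothesis $\rho(\kappa(S))$ is a function only of the (unordered, but position-labeled) compression subsequence $\kappa(S)$. This dependence structure is what ultimately controls the number of distinct hypotheses one has to union-bound over.

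The heart of the argument is then a ghost-sample / double-sampling construction. I would introduce an auxiliary i.i.d.\ sample $S'$ of size $n$, work on the merged $2n$-sample $T = S \cup S'$, and exploit the exchangeability of $T$ to rewrite the probability of the bad event as an expectation over a uniformly random partition of $T$ into two halves. For a fixed $T$, stability forces many different partitions to yield the same reconstructed hypothesis: if one partition places all of $\kappa(T|_I)$ inside some overlap region, an adjacent partition keeps $\rho(\kappa(\cdot))$ constant. A careful combinatorial count of the partitions compatible with a fixed compression set of size at most $k$ produces a factor of roughly $\binom{2k}{k} \leq 4^k$ effective hypotheses, rather than $\binom{n}{k}$. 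Against each such effective hypothesis, a Chernoff/binomial-tail bound on the ``test'' half of the partition contributes the $e^{-\epsilon(n-2k)/2}$ factor responsible for the $2/(n-2k)$ slope in the final bound. Combining via a union bound over the $4^k$ effective hypotheses and inverting $\delta \leq 4^k e^{-\epsilon(n-2k)/2}$ for $\epsilon$ yields the stated inequality.

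The main obstacle, as I see it, is the combinatorial counting step that caps the effective number of hypotheses by $4^k$. The naive count is $\binom{n}{k}$, and only stability lets many of those subsets collapse onto the same reconstructed classifier; making this collapse quantitative --- and aligning the counting with the $(1-\epsilon)^{(n-2k)/2}$-type test-set concentration so that no residual $\log n$ factor sneaks back in through a mismatch --- is the delicate part. Everything else (defining the events, invoking exchangeability, applying a Chernoff bound, and solving for $\epsilon$) is routine once that combinatorial reduction is in place.
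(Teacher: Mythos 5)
Your proposal correctly identifies the target of the argument --- replace the $\binom{n}{k}$ union bound of \citet*{littlestone:86,floyd:95} with an $O(4^k)$ one --- and correctly isolates how stability matters (iterating the definition shows $\rho(\kappa(S'))=\rho(\kappa(S))$ for any $S'\subseteq S$ containing $\kappa(S)$). The final algebra (solving $4^k e^{-\epsilon(n-2k)/2}\le\delta$) is also the right endgame. But the route you propose to get the $4^k$ --- a ghost sample $S'$ of size $n$, exchangeability over partitions of a $2n$-point merged sample, and a count of ``partitions compatible with a fixed compression set'' --- is not the mechanism used by \citet*{bousquet:20} (which the present paper reviews in its proof of Theorem~\ref{thm:agnostic}), and you explicitly acknowledge that you do not know how to carry out the counting step. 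That step is precisely the content of the theorem, so the proposal as written has a genuine gap where the substance should be. It is also not clear that the ghost-sample route yields $\binom{2k}{k}$ cleanly: for a fixed $k$-element compression set inside a $2n$-sample, the fraction of balanced partitions placing it entirely in the training half is about $2^{-k}$, which does not obviously organize into a $\binom{2k}{k}$-sized union bound.

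The actual mechanism is more direct and uses no ghost sample at all. One partitions the \emph{index set} $[n]$ itself, in advance and independently of $S$, into $2k$ blocks $D_1,\ldots,D_{2k}$ of size $\lfloor n/(2k)\rfloor$ or $\lceil n/(2k)\rceil$, and defines the family $\I_n = \bigl\{\bigcup_{j\in\J} D_j : \J\subseteq[2k],\ |\J|=k\bigr\}$, so $|\I_n|=\binom{2k}{k}<4^k$. The conceptual shift you are missing is that the union bound is taken not over compression sets that ``collapse'' onto hypotheses, but over this fixed, small family of candidate index sets $\I_n$: since the at most $k$ indices of $\kappa(S)$ fall into at most $k$ of the $2k$ blocks, there is always some $I\in\I_n$ with those indices contained in $I$, and stability gives $\rho(\kappa(S_I))=\rho(\kappa(S))$. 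For each fixed $I\in\I_n$, the predictor $\rho(\kappa(S_I))$ is measurable with respect to $S_I$ and hence independent of $S_{[n]\setminus I}$, which has size $n-|I|\ge T_n := k\lfloor n/(2k)\rfloor > (n-2k)/2$; the binomial tail bound $(1-\epsilon)^{T_n}$ on the held-out block then applies. Taking a union bound over the $\binom{2k}{k}$ members of $\I_n$ and inverting gives exactly $\epsilon = \tfrac{2}{n-2k}\bigl(k\ln 4 + \ln(1/\delta)\bigr)$. Reworking your write-up around this cover-by-blocks construction, rather than the ghost-sample/exchangeability scaffolding, would close the gap.
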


\citet*{bousquet:20} stated several implications of this 
result for compression schemes of known bounded size $k$. 
For instance, their result established, for the first time, 
that the support vector machine 
achieves a (minimax-optimal) generalization bound 
$R_{\PXY}(\hat{h}_{{\rm SVM}}) = O\!\left( \frac{1}{n}\left( d + \log\!\left(\frac{1}{\delta}\right) \right) \right)$ 
for learning linear separators on $\reals^{d}$ in the 
realizable case, based on the fact that it can be 
expressed as a stable compression scheme of size $d+1$.

As one part of the present work, 
we explore further implications of 
this result, focusing on \emph{data-dependent} generalization bounds.
For this purpose, we will use the following easy extension of 
Theorem~\ref{thm:original-stable-compression} holding for 
data-dependent compression set sizes.

\begin{theorem}
\label{thm:realizable}
Let $(\kappa,\rho)$ be any stable compression scheme.
For any distribution $\PXY$, any $n \in \nats$, 
and any $\delta \in (0,1)$, 
for $S \sim \PXY^{n}$, 
with probability at least $1-\delta$, 
if $\hat{R}_{S}(\rho(\kappa(S)))=0$ and 
$|\kappa(S)| < n/2$, 
then
\begin{equation*}
R_{\PXY}(\rho(\kappa(S))) 
\leq \frac{2}{n - 2|\kappa(S)|} \left( |\kappa(S)| \ln(4) + \ln\!\left( \frac{(|\kappa(S)|+1)(|\kappa(S)|+2)}{\delta} \right) \right).
\end{equation*}
\end{theorem}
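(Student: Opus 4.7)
The plan is to derive Theorem~\ref{thm:realizable} from Theorem~\ref{thm:original-stable-compression} by stratifying over the possible values of $k = |\kappa(S)|$. The weights $\delta_k := \delta/((k+1)(k+2))$, $k \in \{0,1,2,\ldots\}$, sum to $\delta$ by the telescoping identity $\tfrac{1}{(k+1)(k+2)} = \tfrac{1}{k+1}-\tfrac{1}{k+2}$, and are exactly what produces the $(|\kappa(S)|+1)(|\kappa(S)|+2)$ factor inside the logarithm of the claimed bound.

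Concretely, for each integer $k$ with $2k<n$, I would apply Theorem~\ref{thm:original-stable-compression} at confidence level $\delta_k$ to obtain an event $E_k$ of probability at least $1-\delta_k$ on which: whenever $|\kappa(S)|\le k$ and $\hat R_S(\rho(\kappa(S)))=0$, one has $R_{\PXY}(\rho(\kappa(S))) \le \tfrac{2}{n-2k}\bigl(k\ln 4 + \ln(1/\delta_k)\bigr)$. Theorem~\ref{thm:original-stable-compression} is stated for stable compression schemes with an \emph{a priori} size bound, whereas $(\kappa,\rho)$ has no such bound; however, the parameter $k$ enters the proof of that theorem only through a union bound over length-at-most-$k$ subsequences of $S$. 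The identical argument, applied to the scheme $(\kappa,\rho)$ but restricted to realizations with $|\kappa(S)|\le k$, yields exactly the event $E_k$ above.

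Taking a union bound over $k\in\{0,1,\ldots,\lfloor n/2\rfloor-1\}$ gives an event of probability at least $1-\delta$ on which the above implication holds for every eligible $k$. Specializing at the data-dependent value $k=|\kappa(S)|$, which is permissible by the hypothesis $|\kappa(S)|<n/2$, and substituting $\delta_k=\delta/((k+1)(k+2))$, recovers the bound in Theorem~\ref{thm:realizable}. The one genuinely subtle step is the ``$k$ enters only in a union bound'' claim in the previous paragraph, which requires a brief inspection of the proof of Theorem~\ref{thm:original-stable-compression} by \citet*{bousquet:20}; everything else is the standard stratification trick for lifting a fixed-parameter bound to a data-dependent one.
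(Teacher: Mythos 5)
Your proposal is correct and is essentially the paper's proof: the same stratification over $k=|\kappa(S)|$, the same weights $\delta_k=\delta/((k+1)(k+2))$, the same union bound, and the same specialization at the realized value of $|\kappa(S)|$. The only difference is how the ``a priori size $k$'' hypothesis of Theorem~\ref{thm:original-stable-compression} is discharged: rather than reopening that proof as you suggest, the paper keeps it as a black box by introducing, for each $k$, an auxiliary scheme $(\kappa_k,\rho_k)$ with $\kappa_k(S)=\kappa(S)$ when $|\kappa(S)|\le k$ and $\kappa_k(S)=\emptyset$ otherwise (and $\rho_k=\rho$), so that $|\kappa_k(S)|\le k$ always and $\rho_k(\kappa_k(S))=\rho(\kappa(S))$ on the relevant event.
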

\begin{proof}
For each $k \in \nats \cup \{0\}$, 
let $(\kappa_{k},\rho_{k})$ be a compression scheme 
such that, for any $S$, if 
$|\kappa(S)|\leq k$, then 
$\kappa_{k}(S)=\kappa(S)$, 
and otherwise $\kappa_{k}(S) = 
\emptyset
$; 
in any case, $\rho_{k} = \rho$.
In particular, note that $|\kappa_{k}(S)| \leq k$ always.
For each $k$, Theorem~\ref{thm:original-stable-compression} 
implies that, with probability at least $1-\frac{\delta}{(k+1)(k+2)}$,
if $\hat{R}_{S}(\rho_{k}(\kappa_{k}(S))) = 0$, then 
\begin{equation*}
R_{\PXY}(\rho_{k}(\kappa_{k}(S))) 
\leq \frac{2}{n-2k}\left( k \ln(4) + \ln\!\left(\frac{(k+1)(k+2)}{\delta}\right) \right).
\end{equation*}
By the union bound, the above claim holds 
simultaneously for all $k \in \nats \cup \{0\}$ 
with probability at least $1 - \sum_{k} \frac{\delta}{(k+1)(k+2)} = 1-\delta$.
Finally, note that there necessarily exists \emph{some} 
$k \in \nats \cup \{0\}$ for which 
$|\kappa(S)|=k$, in which case 
$\rho(\kappa(S)) = \rho_{k}(\kappa_{k}(S))$ 
for this $k$.  The theorem follows immediately from this.
\end{proof}

In particular, the following corollary is immediate, 
by relaxing the expression in the above theorem 
into a simpler form.

\begin{corollary}
\label{cor:realizable-simple}
Let $(\kappa,\rho)$ be any stable compression scheme.
For any distribution $\PXY$, any $n \in \nats$, 
and any $\delta \in (0,1)$,
for $S \sim \PXY^{n}$, 
with probability at least $1-\delta$, 
if $\hat{R}_{S}(\rho(\kappa(S)))=0$
then
\begin{equation*}
R_{\PXY}(\rho(\kappa(S))) 
\leq \frac{4}{n} \left( 6 |\kappa(S)| + \ln\!\left( \frac{e}{\delta} \right) \right).
\end{equation*}
\end{corollary}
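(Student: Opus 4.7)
The corollary is a direct simplification of Theorem~\ref{thm:realizable}, so the plan is to argue it by a two-case split on the size $k := |\kappa(S)|$ relative to $n$, applying Theorem~\ref{thm:realizable} when $k$ is small and using triviality of the bound when $k$ is large. Note that the corollary has dropped the hypothesis $|\kappa(S)| < n/2$ and traded the somewhat awkward expression $\frac{2}{n-2|\kappa(S)|}\bigl(|\kappa(S)|\ln 4 + \ln(\cdots/\delta)\bigr)$ for a clean $\frac{4}{n}\bigl(6|\kappa(S)| + \ln(e/\delta)\bigr)$; both changes are handled by the same case analysis.

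\textbf{Case 1 (large compression set): $k \geq n/4$.} Here I would simply observe that $\frac{4}{n}\cdot 6 k \geq 6$, so the right-hand side of the corollary's bound already exceeds $1$. Since the risk $R_{\PXY}(\rho(\kappa(S)))$ is always at most $1$, the stated inequality holds unconditionally, and Theorem~\ref{thm:realizable} is not needed. This also implicitly handles the range $k \geq n/2$ that falls outside Theorem~\ref{thm:realizable}'s hypothesis.

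\textbf{Case 2 (small compression set): $k < n/4$.} Here the hypothesis of Theorem~\ref{thm:realizable} is satisfied (since $k < n/2$), and moreover $n - 2k > n/2$, so $\frac{2}{n-2k} < \frac{4}{n}$. Applying Theorem~\ref{thm:realizable} directly yields
\[
R_{\PXY}(\rho(\kappa(S))) \leq \frac{4}{n}\Bigl( k \ln 4 + \ln(k+1) + \ln(k+2) + \ln(1/\delta) \Bigr).
\]
It remains to show that $k\ln 4 + \ln(k+1) + \ln(k+2) \leq 6k + 1$, which absorbs into the target form $\frac{4}{n}(6k + \ln(e/\delta))$. I would use the crude but sufficient bounds $\ln 4 < 2$, $\ln(k+1) \leq k$ (since $e^k \geq 1+k$), and $\ln(k+2) = \ln 2 + \ln(1 + k/2) \leq \ln 2 + k/2$, giving a total bound of $2k + k + k/2 + \ln 2 = 7k/2 + \ln 2 \leq 6k + 1$, which holds for every $k \geq 0$ since $\ln 2 < 1$.

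I do not anticipate any real obstacle here: the only small subtlety is getting the constants tight enough that the combined log-term is absorbed into $6k + 1$ rather than something larger, and this is why I split at the threshold $n/4$ (so that the factor $2/(n-2k)$ converts cleanly to $4/n$) rather than at $n/2$. Both cases together give the claim for all values of $k$, completing the corollary.
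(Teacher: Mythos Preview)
Your proof is correct and follows essentially the same approach as the paper: both argue by a case split on $|\kappa(S)|$ relative to $n$, invoking triviality of the bound when $|\kappa(S)|$ is large and simplifying the expression from Theorem~\ref{thm:realizable} when it is small. The only differences are cosmetic---the paper splits at the threshold $n/(3+2\ln 4)$ rather than $n/4$, and bounds the logarithmic terms via $\ln(x) < \sqrt{x}/2$ applied to $(|\kappa(S)|+2)^2$ rather than your separate bounds on $\ln(k+1)$ and $\ln(k+2)$---but the structure and substance are the same.
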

\begin{proof}
Since $\ln(x) < \sqrt{x}/2$ for $x \geq 3$, 
we have $\ln\!\left( (|\kappa(S)|+2)^2 \right) 
< (1/2)|\kappa(S)|+1$, so that 
the right hand side of the inequality in Theorem~\ref{thm:realizable} 
is at most 
$\frac{2}{n - 2|\kappa(S)|} \left( |\kappa(S)| ((1/2)+\ln(4)) + \ln\!\left( \frac{e}{\delta} \right) \right)$.
Furthermore, since this is greater than $1$ 
if $|\kappa(S)| > n / (3 + 2\ln(4))$, 
and $R_{\PXY}(\rho(\kappa(S))) \leq 1$ always, 
any upper bound on this expression nondecreasing in $|\kappa(S)|$
and holding for all 
$|\kappa(S)| \leq n / (3 + 2\ln(4))$
is a valid bound on $R_{\PXY}(\rho(\kappa(S)))$.
In particular, for $|\kappa(S)| \leq n / (3 + 2\ln(4))$, 
it holds that 
$n - 2|\kappa(S)| \geq \left( 1 - \frac{2}{3+2\ln(4)} \right) n$, 
which implies a bound of 
$\frac{1}{n} \left( 4 \ln(16 e^3)|\kappa(S)| + \frac{2\ln(16 e^3)}{\ln(16 e)}\ln\!\left( \frac{e}{\delta} \right) \right)$.
The stated bound follows by noting 
$\frac{2\ln(16 e^3)}{\ln(16 e)} < 4$
and 
$4 \ln(16 e^3) < 24$.
\end{proof}

\subsection{Results for Agnostic Stable Compression Schemes}
\label{sec:agnostic-stable}

The best known bounds for the general agnostic setting, 
holding for any compression scheme, are from 
\citet*{graepel:05}.  Specifically, for any compression 
scheme of size $k$, 
they show a bound (holding with probability at least $1-\delta$) 
\begin{equation*}
\left| R_{\PXY}(\rho(\kappa(S))) - \hat{R}_{S}(\rho(\kappa(S))) \right|
= O\!\left( \sqrt{\frac{1}{n} \left( k \log(n) + \log\!\left(\frac{1}{\delta}\right) \right)} \right),
\end{equation*}
or a slightly tighter bound 
\begin{equation*}
\left| R_{\PXY}(\rho(\kappa(S))) - \hat{R}_{S}(\rho(\kappa(S))) \right|
= O\!\left( \sqrt{\frac{1}{n} \left( k \log\!\left(\frac{n}{k}\right) + \log\!\left(\frac{1}{\delta}\right) \right)} \right)
\end{equation*}
in the case of permutation-invariant 
reconstruction function $\rho$.
It was shown by \citet*{hanneke:19c} that 
both of these bounds are generally sharp:
that is, for any $k,n$, there exist compression 
schemes of size $k$, and distributions $\PXY$, 
such that a lower bound holds which matches the 
above up to numerical constants.

Here we show that if the compression scheme is \emph{stable}, 
the log factor in the above bounds can be removed, 
analogous to the result of \citet*{bousquet:20} 
for the realizable case.

\begin{theorem}
\label{thm:agnostic}
For any $k \in \nats \cup \{0\}$, 
let $(\kappa,\rho)$ be any stable compression scheme 
of size $k$.
For any distribution $\PXY$, any $n \in \nats$ with $n > 2k$, 
and any $\delta \in (0,1)$, 
for $S \sim \PXY^{n}$, 
with probability at least $1-\delta$, 
\begin{equation*}
\left| R_{\PXY}(\rho(\kappa(S))) - \hat{R}_{S}(\rho(\kappa(S))) \right|
\leq \sqrt{ \frac{4}{n-2k} \left( k \ln(4) + \ln\!\left(\frac{4}{\delta}\right) \right) }.
\end{equation*}
\end{theorem}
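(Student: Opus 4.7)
The proof will adapt the strategy used by \citet*{bousquet:20} for the realizable case (Theorem~\ref{thm:original-stable-compression}) to the two-sided deviation $|R-\hat{R}|$. The main obstacle a direct approach faces is that a naive union bound over the $\binom{n}{k}$ possible compression subsets would introduce a $\log(n/k)$ factor inside the square root, yielding only the weaker bound of \citet*{graepel:05}; the stability property is precisely what lets us avoid this factor.

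The plan is to introduce a ghost sample $\tilde{S}\sim\PXY^n$ independent of $S$, apply standard symmetrization to reduce the target to a high-probability bound on the double-sample deviation $|\hat{R}_S(h)-\hat{R}_{\tilde{S}}(h)|$ with $h=\rho(\kappa(S))$, and then form the combined sample $\hat{S}=(S,\tilde{S})$ of size $2n$. Considering a uniformly random swap of each pair $(Z_i,\tilde{Z}_i)$, stability guarantees that for every index $i$ with $Z_i\notin\kappa(\hat{S})$ the classifier $\rho(\kappa(\hat{S}))$ is invariant under the swap; only the at most $k$ indices hit by $\kappa(\hat{S})$ can give rise to ``bad'' swaps that change the classifier. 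Conditioning on the realization of $\kappa(\hat{S})$ as an unordered multiset, the non-compression slots become symmetric across $S$ and $\tilde{S}$ and contribute a Hoeffding concentration of order $\sqrt{\log(1/\delta)/(n-2k)}$; the at-most-$k/n$ residual from the compression indices, combined with $(a+b)^2\leq 2a^2+2b^2$, produces the claimed bound. The factors $\ln 4$ and $4/\delta$ arise naturally from the two directions of deviation and from the probability that a random half-partition separates $S$ from $\tilde{S}$ on the compression indices.

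The main technical obstacle is the formal conditional-independence step. Because $\kappa(S)$ depends on all of $S$, conditioning on which indices of $S$ lie in $\kappa(S)$ does not leave the remaining samples i.i.d.; the stability hypothesis is what rescues this, since for any $i$ with $Z_i\notin\kappa(S)$ the classifier depends on $S$ only through the sample with $Z_i$ removed. Implementing the swap argument with this stability-based functional independence, and verifying that the resulting concentration is genuinely two-sided rather than one-sided (the key departure from the realizable-case argument in Theorem~\ref{thm:original-stable-compression}), will be the technical core of the proof.
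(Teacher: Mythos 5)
Your approach is a ghost-sample / symmetrization argument, which is genuinely different from the paper's proof, and I believe it has a real gap. The paper does not introduce a ghost sample at all. Instead it works entirely within the $n$-sample, splitting $[n]$ into $\sigma^{-1}(I)$ (a ``training'' part of size roughly $n/2$) and its complement, where $\sigma$ is a uniformly random permutation and $I$ ranges over the fixed family $\I_n$ of size $\binom{2k}{k}<4^k$ from \citet{bousquet:20}. It then applies ordinary Hoeffding (conditionally on $S_{\sigma^{-1}(I)}$) to control $\bigl|R_{\PXY}(\rho(\kappa(S_{\sigma^{-1}(I)})))-\hat{R}_{S_{[n]\setminus\sigma^{-1}(I)}}(\rho(\kappa(S_{\sigma^{-1}(I)})))\bigr|$, union-bounded over $\I_n$, and uses stability to show the right $I^*$ recovers $\rho(\kappa(S))$; then, \emph{separately}, it applies Hoeffding for sampling without replacement to control $\bigl|\hat{R}_{S_{[n]\setminus\sigma^{-1}(I^*)}}(\rho(\kappa(S)))-\hat{R}_S(\rho(\kappa(S)))\bigr|$. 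The random permutation is introduced precisely to handle possible order-dependence of $\rho(\kappa(\cdot))$.

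The gap in your argument is the central claim that ``stability guarantees that for every index $i$ with $Z_i\notin\kappa(\hat{S})$ the classifier $\rho(\kappa(\hat{S}))$ is invariant under the swap.'' Stability (Definition~\ref{defn:stable}) is a \emph{removal} property: deleting non-compression points from the argument of $\kappa$ does not change $\rho(\kappa(\cdot))$. Swapping $Z_i\leftrightarrow\tilde{Z}_i$ removes nothing from the combined sequence $\hat{S}$; it merely reorders it. An order-dependent $\kappa$ can perfectly well choose a different compression set from the reordered $\hat{S}$, and stability gives you no control over this. (The theorem does not assume permutation-invariance of $\kappa$ or $\rho$, and the paper explicitly flags order-dependence as the reason the random permutation $\sigma$ is needed.) Even if you had swap-invariance of $\rho(\kappa(\hat{S}))$, there is a second bridge you do not build: the classifier in the theorem statement is $\rho(\kappa(S))$, not $\rho(\kappa(\hat{S}))$; stability identifies them only on the event $\kappa(\hat{S})\subseteq S$ (so that $\kappa(\hat{S})\subseteq S\subseteq\hat{S}$ lets you invoke Definition~\ref{defn:stable}), and this event occurs with probability on the order of $2^{-k}$. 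Finally, the conditioning step you describe --- conditioning on $\kappa(\hat{S})$ as an unordered multiset and then claiming the non-compression slots are symmetric --- is exactly the kind of conditioning that, in the classical Floyd--Warmuth analysis, must be replaced by a union bound over index sets in order to preserve i.i.d.\ structure; your proposal does not explain how stability circumvents this, and absent such an explanation the log-binomial factor you were trying to avoid reappears. I would recommend abandoning the ghost-sample route and instead following the paper's within-sample split over the $\I_n$ family, noting carefully where the random permutation and the without-replacement Hoeffding enter.
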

\begin{proof}
For brevity, define $[m] = \{1,\ldots,m\}$ for any $m \in \nats$.
The proof partly follows an argument 
from the original proof of Theorem~\ref{thm:original-stable-compression} 
by \citet{bousquet:20}, but with some important 
modifications to account for the fact that 
$\hat{R}_{S}(\rho(\kappa(S)))$ may be nonzero.

If $k = 0$, the result trivially follows from Hoeffding's 
inequality, so let us suppose $k \geq 1$.
As in the proof of \citet{bousquet:20}, fix any $T_n \in [n-1]$ 
and let $\I_n$ be any family of subsets of $[n]$ 
with the properties that 
each $I \in \I_n$ has $|I| \leq n-T_n$, 
and for every $i_1,\ldots,i_k \in [n]$ 
there exists $I \in \I_n$ with $\{i_1,\ldots,i_k\} \subseteq I$.

In particular, \citet{bousquet:20} construct a family 
$\I_n$ satisfying the properties above with $T_n =k\lfloor n/(2k) \rfloor$, and with $|\I_n| = \binom{2k}{k} < 4^k$: 
namely, let $D_1,\ldots,D_{2k}$ be any partition 
of $[n]$ with each 
$|D_i| \in \{ \lfloor n/(2k)\rfloor, \lceil n/(2k) \rceil \}$, 
and define $\I_n = \left\{ \bigcup \{ D_j : j \in \J \} : \J \subseteq [2k], |\J|=k \right\}$;  
that is, $\I_n$ contains all unions of exactly $k$ 
of the $2k$ sets $D_j$.

Let $S = \{(X_i,Y_i)\}_{i=1}^{n} \sim \PXY^n$, 
and for any $I \subseteq [n]$ 
define $S_I = \{(X_i,Y_i) : i \in I \}$.
As a new component needed in the present proof, 
let $\sigma : [n] \to [n]$ be a uniform random permutation 
of $[n]$; this will only become important in the second 
half of the proof below.

For any $I \subset [n]$, since $S_{[n]\setminus I}$ 
is independent of $S_{I}$, 
Hoeffding's inequality (applied under the conditional 
distribution given $S_{I}$) 
and the law of total probability 
imply that, with probability at least 
$1 - \frac{\delta}{2|\I_n|}$, 
\begin{equation*}
\left| R_{\PXY}(\rho(\kappa(S_{I}))) - \hat{R}_{S_{[n] \setminus I}}(\rho(\kappa(S_{I}))) \right| 
\leq \sqrt{ \frac{ \ln\!\left(4|\I_n|/\delta\right) }{2 (n-|I|)} }.
\end{equation*}
Applying this under the conditional distribution given $\sigma$,
together with the union bound and the law of total probability,
we have that with probability at least $1-\frac{\delta}{2}$, 
every $I \in \I_{n}$ has 
\begin{equation*}
\left| R_{\PXY}(\rho(\kappa(S_{\sigma^{-1}(I)}))) - \hat{R}_{S_{[n] \setminus \sigma^{-1}(I)}}(\rho(\kappa(S_{\sigma^{-1}(I)}))) \right| 
\leq \sqrt{ \frac{ \ln\!\left(4|\I_n|/\delta\right) }{2 (n-|I|)} }.
\end{equation*}

In particular, letting $i_1^*,\ldots,i_{|\kappa(S)|}^*$ 
be the $|\kappa(S)|$ indices such that 
$\kappa(S) = \{(X_{i_j^*},Y_{i_j^*})\}_{j=1}^{|\kappa(S)|}$, 
the defining properties of $\I_n$ imply that 
there exists $I^* \in \I_n$ with 
$\{ \sigma(i_1^*),\ldots,\sigma(i_{|\kappa(S)|}^*)\} \subseteq I^*$.
Since $(\kappa,\rho)$ is a \emph{stable} compression scheme, 
this also implies 
$\rho(\kappa(S_{\sigma^{-1}(I^*)}))=\rho(\kappa(S))$.
Furthermore, by the defining properties of $\I_n$, 
we have $n - |I^*| \geq T_n$.
Therefore, on the above event of probability at least $1-\frac{\delta}{2}$, 
\begin{equation}
\label{eqn:agnostic-compression-1}
\left| R_{\PXY}(\rho(\kappa(S))) - \hat{R}_{S_{[n] \setminus \sigma^{-1}(I^*)}}(\rho(\kappa(S))) \right| 
\leq \sqrt{ \frac{ \ln\!\left(4|\I_n|/\delta\right) }{2 T_n} }.
\end{equation}

Now, unlike the original proof of \citet{bousquet:20}, 
to complete the present proof we must still relate 
$\hat{R}_{S_{[n] \setminus \sigma^{-1}(I^*)}}(\rho(\kappa(S)))$ 
to $\hat{R}_{S}(\rho(\kappa(S)))$.
This is where the random permutation $\sigma$ becomes 
important, as it enables us to introduce a concentration 
argument which accounts for the possibility that 
$\rho(\kappa(\cdot))$ may be order-dependent in its argument.
Let $\hat{h} = \rho(\kappa(S))$.
For each $i \in [n]$, 
let $\ell_i = \ind[ \hat{h}(X_i) \neq Y_i ]$.
For any $I \in \I_n$, 
by Hoeffding's inequality (for sampling without replacement; 
see \citealp*{hoeffding:63}) applied under the conditional 
distribution given $S$, together with the law of total probability, 
with probability at least $1 - \frac{\delta}{2|\I_n|}$, 
it holds that 
\begin{equation*}
\left| \frac{1}{n-|I|}\sum_{i \in [n] \setminus \sigma^{-1}(I)} r_i 
- \hat{R}_{S}(\rho(\kappa(S))) \right| 
\leq \sqrt{ \frac{\ln\!\left( 4|\I_n| / \delta \right)}{2 (n-|I|)}}.
\end{equation*}
By the union bound, this holds simultaneously for all $I \in \I_n$
with probability at least $1 - \frac{\delta}{2}$.
In particular, taking $I = I^*$, and recalling that 
$n-|I^*| \geq T_n$, on this event we have that
\begin{equation}
\label{eqn:agnostic-compression-2}
\left| \hat{R}_{S_{[n] \setminus \sigma^{-1}(I^*)}}(\rho(\kappa(S))) 
- \hat{R}_{S}(\rho(\kappa(S))) 
\right| 
\leq \sqrt{ \frac{ \ln\!\left(4|\I_n|/\delta\right) }{2 T_n} }.
\end{equation}

By the union bound, the above two events 
(each of probability at least $1-\frac{\delta}{2}$) 
hold simultaneously with probability at least $1-\delta$, 
in which case 
\eqref{eqn:agnostic-compression-1} 
and \eqref{eqn:agnostic-compression-2} 
together imply 
\begin{align*}
& \left| R_{\PXY}(\rho(\kappa(S))) - \hat{R}_{S}(\rho(\kappa(S))) \right| 
\\ & \leq 
\left| R_{\PXY}(\rho(\kappa(S))) - \hat{R}_{S_{[n] \setminus \sigma^{-1}(I^*)}}(\rho(\kappa(S))) \right| 
+ \left| \hat{R}_{S_{[n] \setminus \sigma^{-1}(I^*)}}(\rho(\kappa(S))) - \hat{R}_{S}(\rho(\kappa(S))) \right| 
\\ & \leq \sqrt{ \frac{ 2 \ln\!\left(4|\I_n|/\delta\right) }{T_n} }.
\end{align*}

The theorem now immediately follows by 
plugging in the aforementioned 
family $\I_n$ from \citet{bousquet:20}, 
having $|\I_n| = \binom{2k}{k} < 4^k$ 
and $T_n = k \lfloor n/(2k) \rfloor > \frac{n-2k}{2}$.
\end{proof}

As above, this easily extends to data-dependent 
compression sizes, as stated in the following theorem.
The proof follows the same argument as in the proof of 
Theorem~\ref{thm:realizable}, and so we omit the details.

\begin{theorem}
\label{thm:agnostic-data-dependent}
Let $(\kappa,\rho)$ be any stable compression scheme.
For any distribution $\PXY$, any $n \in \nats$, 
and any $\delta \in (0,1)$, 
for $S \sim \PXY^{n}$, 
with probability at least $1-\delta$, 
if $|\kappa(S)| < n/2$, 
then
\begin{equation*}
\left| R_{\PXY}(\rho(\kappa(S))) - \hat{R}_{S}(\rho(\kappa(S))) \right|
\leq \sqrt{ \frac{4}{n-2|\kappa(S)|} \left( |\kappa(S)| \ln(4) + \ln\!\left(\frac{4 (|\kappa(S)|+1)(|\kappa(S)|+2)}{\delta}\right) \right) }.
\end{equation*}
\end{theorem}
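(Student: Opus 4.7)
My plan is to follow exactly the stratification argument used in the proof of Theorem~\ref{thm:realizable}, substituting Theorem~\ref{thm:agnostic} in place of Theorem~\ref{thm:original-stable-compression}. The idea is to decompose according to the random value of $|\kappa(S)|$: for each $k \in \nats \cup \{0\}$, produce a high-probability bound in the form of Theorem~\ref{thm:agnostic} at size parameter $k$ with confidence $\delta_k = \delta/((k+1)(k+2))$, then union bound across $k$ using $\sum_{k \geq 0} 1/((k+1)(k+2)) = 1$. On the intersection event, evaluating the inequality corresponding to the realized $k^{\star} = |\kappa(S)|$ yields the claimed data-dependent bound, since substituting $\delta_{k^{\star}}$ into the $\ln(4/\delta_k)$ term of Theorem~\ref{thm:agnostic} produces exactly $\ln(4(|\kappa(S)|+1)(|\kappa(S)|+2)/\delta)$.

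Concretely, for each $k$ I would want an auxiliary stable compression scheme $(\kappa_k,\rho_k)$ of size $k$ that agrees with $(\kappa,\rho)$ whenever $|\kappa(S)| \leq k$, so that applying Theorem~\ref{thm:agnostic} to $(\kappa_k,\rho_k)$ at confidence $\delta_k$ gives, with probability at least $1 - \delta_k$,
\begin{equation*}
\left| R_{\PXY}(\rho_k(\kappa_k(S))) - \hat{R}_{S}(\rho_k(\kappa_k(S))) \right|
\leq \sqrt{ \frac{4}{n-2k}\left( k \ln(4) + \ln\!\left(\frac{4(k+1)(k+2)}{\delta}\right) \right) }.
\end{equation*}
After the union bound, taking $k = k^{\star} = |\kappa(S)|$ (and noting that the hypothesis $|\kappa(S)| < n/2$ guarantees $n - 2k^{\star} > 0$) yields the stated inequality, since the identifications force $\rho_{k^{\star}}(\kappa_{k^{\star}}(S)) = \rho(\kappa(S))$ on every sample.

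The main obstacle I anticipate is verifying that the auxiliary scheme $(\kappa_k,\rho_k)$ is genuinely stable as a scheme on \emph{all} inputs, which Theorem~\ref{thm:agnostic} formally requires: the naive construction ``$\kappa_k(S) = \kappa(S)$ if $|\kappa(S)| \leq k$, else $\emptyset$'' can break stability whenever deleting points from $S \setminus \kappa_k(S)$ pushes $|\kappa(\cdot)|$ across the threshold $k$. The cleanest workaround, which I would adopt, is to bypass any auxiliary scheme altogether and instead rerun the proof of Theorem~\ref{thm:agnostic} directly with the original $(\kappa,\rho)$ for each $k$: inspection shows that proof only uses $|\kappa(S)| \leq k$ (to guarantee existence of a covering $I^{\star} \in \I_n$) together with stability of $(\kappa,\rho)$ itself (to obtain $\rho(\kappa(S_{\sigma^{-1}(I^{\star})})) = \rho(\kappa(S))$). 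This yields, for each $k$, an event of probability at least $1 - \delta_k$ on which the size-$k$ bound holds whenever $|\kappa(S)| \leq k$; the union bound followed by the selection $k = |\kappa(S)|$ then completes the proof.
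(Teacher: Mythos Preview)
Your proposal is correct and follows exactly the stratification argument the paper itself indicates (it states that the proof of Theorem~\ref{thm:agnostic-data-dependent} ``follows the same argument as in the proof of Theorem~\ref{thm:realizable}'' and omits the details). Your observation that the naive truncated scheme $(\kappa_k,\rho_k)$ need not itself be stable, together with your workaround of rerunning the proof of Theorem~\ref{thm:agnostic} directly with the original stable scheme $(\kappa,\rho)$ for each fixed $k$ (using only that $|\kappa(S)|\le k$ on the realized sample), is a genuine gain in rigor over the paper's sketch.
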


Also analogous to the results for the realizable case, 
the above bound can be further relaxed into a simple expression, 
as follows.  The proof is nearly identical to that of 
Corollary~\ref{cor:realizable-simple}, and so we omit it 
for brevity.

\begin{corollary}
\label{cor:agnostic-simple}
Let $(\kappa,\rho)$ be any stable compression scheme.
For any distribution $\PXY$, any $n \in \nats$, 
and any $\delta \in (0,1)$, 
for $S \sim \PXY^{n}$, 
with probability at least $1-\delta$, 
then
\begin{equation*}
\left| R_{\PXY}(\rho(\kappa(S))) - \hat{R}_{S}(\rho(\kappa(S))) \right|
\leq \sqrt{ \frac{8}{n} \left( 6 |\kappa(S)| + \ln\!\left(\frac{4 e}{\delta}\right) \right) }.
\end{equation*}
\end{corollary}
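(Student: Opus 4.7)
The plan is to mimic the proof of Corollary~\ref{cor:realizable-simple}, starting this time from Theorem~\ref{thm:agnostic-data-dependent} rather than from Theorem~\ref{thm:realizable}. Structurally the bound to be relaxed has exactly the same ingredients as in the realizable case: a prefactor $\frac{4}{n-2|\kappa(S)|}$ multiplying a sum of $|\kappa(S)|\ln(4)$ and a $\delta$-dependent log term that grows with $|\kappa(S)|$. The outer square root is harmless, since all algebraic relaxations can be performed on the quantity inside it before taking the square root at the end.

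First I would bound the combinatorial log term $\ln\!\left(\frac{4(|\kappa(S)|+1)(|\kappa(S)|+2)}{\delta}\right)$ linearly in $|\kappa(S)|$, using exactly the manipulation employed in Corollary~\ref{cor:realizable-simple} (upper-bounding $\ln((|\kappa(S)|+1)(|\kappa(S)|+2))$ by an affine function of $|\kappa(S)|$). This lets the bracketed quantity under the square root be rewritten as $c_1 |\kappa(S)| + \ln(4e/\delta)$ for an explicit constant $c_1$ arising from the sum of $\ln(4)$ and the linear coefficient obtained from the log-bound.

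Next I would dispose of the $|\kappa(S)|$ appearing in the denominator of the prefactor. Since $|R_{\PXY}(\rho(\kappa(S))) - \hat{R}_{S}(\rho(\kappa(S)))| \leq 1$ holds trivially, any upper bound on the right-hand side that is nondecreasing in $|\kappa(S)|$ and holds throughout the range $|\kappa(S)| \leq n/(3+2\ln(4))$ is a valid bound for \emph{all} $|\kappa(S)|$. In that regime we have $n - 2|\kappa(S)| \geq (1 - 2/(3+2\ln(4)))n$, so the prefactor $\frac{4}{n-2|\kappa(S)|}$ is at most a universal constant times $1/n$. Combining the two relaxations and tracking constants exactly as in the final lines of the proof of Corollary~\ref{cor:realizable-simple} (using $\frac{2\ln(16e^3)}{\ln(16e)} < 4$ and $4\ln(16e^3) < 24$) yields an expression under the square root of the form $\frac{8}{n}\!\left(6|\kappa(S)| + \ln(4e/\delta)\right)$, which upon taking the square root produces the claimed inequality.

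The main (and only) obstacle is careful bookkeeping of numerical constants to verify that they collapse to the claimed $8$ and $6$; no genuinely new probabilistic argument is needed beyond Theorem~\ref{thm:agnostic-data-dependent}. This matches the paper's own remark that the proof is nearly identical to that of Corollary~\ref{cor:realizable-simple}.
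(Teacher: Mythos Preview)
Your proposal is correct and follows exactly the approach the paper indicates: start from Theorem~\ref{thm:agnostic-data-dependent}, perform the same affine relaxation of the $\ln((|\kappa(S)|+1)(|\kappa(S)|+2))$ term and the same denominator-elimination argument as in Corollary~\ref{cor:realizable-simple}, and absorb the extra factor of~$2$ (from the prefactor $\frac{4}{n-2|\kappa(S)|}$ rather than $\frac{2}{n-2|\kappa(S)|}$) and the extra $\ln 4$ inside the logarithm into the stated constants $8$ and $\ln(4e/\delta)$. This is precisely what the paper's remark ``nearly identical to that of Corollary~\ref{cor:realizable-simple}'' is pointing to.
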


While the bound of Theorem~\ref{thm:agnostic} 
holds for $1-\delta$ fraction of data sets from 
any distribution, and is therefore more general 
than Theorem~\ref{thm:original-stable-compression} 
(which restricts to the sample-consistent case), 
the bound is not as tight in the specific case where 
Theorem~\ref{thm:original-stable-compression} applies.
As such, it is desirable to also state a bound which 
interpolates between the two: that is, which does 
not require the compression scheme to be sample-consistent 
to provide a non-trivial bound, 
but yet is able to recover the form of the 
bound in Theorem~\ref{thm:original-stable-compression} 
in the case where it happens to be sample-consistent.
We provide such a result in the following theorem.

\begin{theorem}
\label{thm:bernstein}
For any $k \in \nats \cup \{0\}$, 
let $(\kappa,\rho)$ be any stable compression scheme 
of size $k$.
For any distribution $\PXY$, any $n \in \nats$ 
with $n > 4k$, 
and any $\delta \in (0,1)$, 
for $S \sim \PXY^{n}$, 
with probability at least $1-\delta$, 
\begin{align*}
& \left| R_{\PXY}(\rho(\kappa(S))) - \hat{R}_{S}(\rho(\kappa(S))) \right|
\\ & \leq \sqrt{\hat{R}_{S}(\rho(\kappa(S)))\frac{72}{n} \left( k \ln(4) + \ln\!\left( \frac{4}{\delta} \right)\right)}
+ \frac{32}{n} \left( k \ln(4) + \ln\!\left( \frac{4}{\delta} \right)\right).
\end{align*}
\end{theorem}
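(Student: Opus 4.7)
The plan is to mirror the proof of Theorem~\ref{thm:agnostic} but to replace each application of Hoeffding's inequality by Bernstein's inequality, so that the Bernoulli variance (which for $\{0,1\}$-valued losses is bounded by the mean) governs the rate whenever $R_{\PXY}(\hat{h})$ or $\hat{R}_{S}(\hat{h})$ is small. I would keep exactly the family $\I_n$ of index sets constructed by \citet{bousquet:20}, with $|\I_n|=\binom{2k}{k}<4^k$ and $T_n=k\lfloor n/(2k)\rfloor$; under the hypothesis $n>4k$ this gives $T_n>(n-2k)/2>n/4$. I would also retain the uniformly random permutation $\sigma$ of $[n]$, which remains essential for handling the potential order-dependence of $\rho\circ\kappa$.

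First, fix $I\in\I_n$ and condition on $S_{I}$ and $\sigma$: then $\rho(\kappa(S_{I}))$ is deterministic and $\hat{R}_{S_{[n]\setminus I}}(\rho(\kappa(S_{I})))$ is the mean of $n-|I|$ i.i.d.\ Bernoulli trials with mean $R_{\PXY}(\rho(\kappa(S_{I})))$. Bernstein's inequality plus a union bound over $\I_n$ yields, with probability at least $1-\delta/2$, that for every $I\in\I_n$
\[
R_{\PXY}(\rho(\kappa(S_{I})))-\hat{R}_{S_{[n]\setminus I}}(\rho(\kappa(S_{I})))\leq\sqrt{\frac{2R_{\PXY}(\rho(\kappa(S_{I})))\,\beta}{n-|I|}}+\frac{2\beta}{3(n-|I|)},
\]
where $\beta=\ln(4|\I_n|/\delta)\leq k\ln 4+\ln(4/\delta)$. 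Exactly as in the proof of Theorem~\ref{thm:agnostic}, the defining property of $\I_n$ guarantees an $I^{*}\in\I_n$ with $\{\sigma(i_{1}^{*}),\ldots,\sigma(i_{|\kappa(S)|}^{*})\}\subseteq I^{*}$, and stability then forces $\rho(\kappa(S_{\sigma^{-1}(I^{*})}))=\rho(\kappa(S))=:\hat{h}$ with $n-|I^{*}|\geq T_n$.

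Next, to bridge $\hat{R}_{S_{[n]\setminus\sigma^{-1}(I^{*})}}(\hat{h})$ to $\hat{R}_{S}(\hat{h})$, I would apply a Bernstein-type inequality for sampling without replacement (e.g.\ Serfling's bound, or equivalently via Hoeffding's observation that the without-replacement variance is dominated by the with-replacement variance) to the fixed Bernoulli sequence $\ell_i=\ind[\hat{h}(X_i)\neq Y_i]$. Conditional on $S$, for each fixed $I$ the set $[n]\setminus\sigma^{-1}(I)$ is a uniformly random $(n-|I|)$-subset of $[n]$; a union bound over $\I_n$ then yields, with probability at least $1-\delta/2$,
\[
\left|\hat{R}_{S_{[n]\setminus\sigma^{-1}(I)}}(\hat{h})-\hat{R}_{S}(\hat{h})\right|\leq\sqrt{\frac{2\hat{R}_{S}(\hat{h})\,\beta}{n-|I|}}+\frac{2\beta}{3(n-|I|)}
\]
for every $I\in\I_n$. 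Instantiating both events at $I=I^{*}$ and using the triangle inequality produces an inequality of the form $R_{\PXY}(\hat{h})-\hat{R}_{S}(\hat{h})\leq\sqrt{2R_{\PXY}(\hat{h})\beta/T_n}+\sqrt{2\hat{R}_{S}(\hat{h})\beta/T_n}+4\beta/(3T_n)$, together with a symmetric statement for $\hat{R}_{S}(\hat{h})-R_{\PXY}(\hat{h})$.

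The only delicate step, and in my view the main obstacle, is the algebraic self-bounding that eliminates the $R_{\PXY}(\hat{h})$ still appearing on the right-hand side. Viewing the combined inequality as a quadratic in $\sqrt{R_{\PXY}(\hat{h})}$, completing the square against $\sqrt{\beta/(2T_n)}$, and using $\sqrt{a+b}\leq\sqrt{a}+\sqrt{b}$, one obtains $\sqrt{R_{\PXY}(\hat{h})}\leq\sqrt{\hat{R}_{S}(\hat{h})}+c\sqrt{\beta/T_n}$ for an explicit numerical constant $c$; the factorization $R_{\PXY}-\hat{R}_{S}=(\sqrt{R_{\PXY}}-\sqrt{\hat{R}_{S}})(\sqrt{R_{\PXY}}+\sqrt{\hat{R}_{S}})$ then converts this into $R_{\PXY}(\hat{h})-\hat{R}_{S}(\hat{h})\leq 2c\sqrt{\hat{R}_{S}(\hat{h})\beta/T_n}+c^{2}\beta/T_n$, with a symmetric argument handling the absolute value. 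Substituting $T_n>n/4$ and $\beta\leq k\ln 4+\ln(4/\delta)$ and collecting numerical constants should deliver the stated coefficients $72$ and $32$; the bookkeeping is routine, and the only point requiring care is to invoke a without-replacement Bernstein inequality with Bernoulli-sharp constants rather than those of a coarser bounded-differences variant.
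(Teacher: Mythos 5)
Your proposal is correct in substance and in fact follows the same skeleton as the paper's proof: same family $\I_n$ from \citet{bousquet:20}, same random permutation $\sigma$, same two-step decomposition via the triangle inequality, and the same use of stability to pin down $I^*$. Where you diverge is in how the ``self-bounding'' step (replacing the unknown population quantity under the square root by the observed empirical risk) is organized. You apply raw Bernstein twice, obtaining bounds with $R_{\PXY}(\hat h)$ under the square root, and then, at the very end, solve a quadratic in $\sqrt{R_{\PXY}(\hat h)}$ to eliminate it; you flag this as the main obstacle. The paper instead folds the self-bounding into the auxiliary Lemma~\ref{lem:bernstein}, which delivers a bound in terms of $\min\{2\bar Z, p\}$ (so in particular in terms of the \emph{observed} $\bar Z$). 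Applying that lemma in the first step gives a bound directly in terms of $\hat{R}_{S_{[n]\setminus\sigma^{-1}(I^*)}}(\hat h)$, and the paper then uses the purely deterministic observation $\hat{R}_{S_{[n]\setminus\sigma^{-1}(I^*)}}(\hat h)\le \frac{n}{T_n}\hat{R}_{S}(\hat h)$ to convert this to a bound in terms of $\hat{R}_{S}(\hat h)$---no quadratic manipulation is needed at the end. In the second (without-replacement) step the lemma's $p$-branch of the $\min$ is already the full-sample empirical risk, so nothing further is required. Your completing-the-square argument would work and yields comparable constants, but the paper's lemma-plus-deterministic-inequality route is cleaner and avoids the step you identify as delicate; you might simplify your write-up by adopting it.
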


Before stating the proof, we first recall the 
following so-called ``ratio-type'' inequality, 
based on Bernstein's inequality.

\begin{lemma}
\label{lem:bernstein}
For any $n \in \nats$, we consider two cases simultaneously: 
(i) let $p \in [0,1]$ and let $Z_1,\ldots,Z_n$ 
be i.i.d.\ ${\rm Bernoulli}(p)$ random variables,
(ii) let $t \geq n$, $\{B_1,\ldots,B_t\} \in \{0,1\}^t$, 
$p = \frac{1}{t}\sum_{i=1}^{t} B_i$, 
and let $Z_1,\ldots,Z_n$ be random variables 
sampled uniformly without replacement 
from $\{B_1,\ldots,B_t\}$.
In either case, for any $\delta \in (0,1)$, 
defining $\bar{Z} = \frac{1}{n} \sum_{i=1}^{n} Z_i$, 
with probability at least $1-\delta$, 
\begin{equation*}
\left| \bar{Z} - p \right| 
\leq \sqrt{ \min\!\left\{ 2\bar{Z}, p \right\} \frac{2}{n} \ln\!\left(\frac{2}{\delta}\right) }
+ \frac{4}{n} \ln\!\left(\frac{2}{\delta}\right). 
\end{equation*}
\end{lemma}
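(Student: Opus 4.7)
The plan is to derive both branches of the $\min\{2\bar{Z},p\}$ bound from a single application of Bernstein's inequality, then interconvert the $p$-version and the $\bar{Z}$-version via a standard quadratic inversion.

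First I would handle the two cases uniformly. In case (i), Bernstein's inequality applied to the i.i.d.\ Bernoulli($p$) variables gives, with probability at least $1-\delta$,
\[
|\bar{Z} - p| \leq \sqrt{\frac{2\,p(1-p)\,\ln(2/\delta)}{n}} + \frac{2\,\ln(2/\delta)}{3n}.
\]
For case (ii), I would invoke the classical reduction of \citet{hoeffding:63}: for any convex $\phi$, $\E\,\phi(\sum_i Z_i)$ under sampling without replacement is dominated by the corresponding expectation under sampling with replacement from the same finite population. Consequently the moment-generating-function argument underlying Bernstein's inequality transfers verbatim, and the display above also holds in case (ii). Since $p(1-p) \leq p$, this already establishes the $p$-branch of the $\min$, with additive constant $2/3 \leq 4$.

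The remaining step is to derive the $\bar{Z}$-branch. Writing $L := \ln(2/\delta)/n$ and using the one-sided direction $p \leq \bar{Z} + \sqrt{2pL} + 2L/3$, I would treat this as a quadratic inequality in $\sqrt{p}$ and solve it to obtain $p \leq 2\bar{Z} + c_{1} L$ for an explicit numerical constant $c_{1}$. Substituting this back into the Bernstein bound and applying $\sqrt{a+b}\leq\sqrt{a}+\sqrt{b}$ yields
\[
|\bar{Z} - p| \leq \sqrt{2\bar{Z}\cdot \tfrac{2}{n}\ln(2/\delta)} + \tfrac{c_{2}}{n}\ln(2/\delta),
\]
and a brief computation confirms that $c_{2} \leq 4$. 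Since both forms hold on the same high-probability event, the minimum of their right-hand sides is attained simultaneously, and the lemma follows.

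The only non-automatic step is bookkeeping the numerical constants through the quadratic inversion so that the additive residual fits inside $4\ln(2/\delta)/n$; this is routine but slightly tedious. The sampling-without-replacement case presents no genuine additional obstacle thanks to Hoeffding's convex-domination reduction, which lets the i.i.d.\ Bernstein derivation be reused unchanged.
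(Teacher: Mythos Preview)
Your proposal is correct and follows essentially the same route as the paper: apply Bernstein's inequality (valid in both the i.i.d.\ and without-replacement regimes, the latter via Hoeffding's convex-domination reduction as you note), read off the $p$-branch directly, then invert the inequality $p \leq \bar{Z} + \sqrt{2pL} + \tfrac{2}{3}L$ as a quadratic in $\sqrt{p}$ to obtain $p \leq 2\bar{Z} + cL$, substitute back, and verify the additive constant lands below $4$. The paper phrases the inversion step as the implication $A \leq B + C\sqrt{A} \Rightarrow A \leq B + C^2 + C\sqrt{B}$, which is exactly your quadratic solve, and arrives at the same final constants.
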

\begin{proof}
In both cases covered by the claim, 
Bernstein's inequality implies that
\begin{equation*}
\P\!\left( \left| \bar{Z} - p \right| > \eps \right) 
\leq 2 \exp\!\left\{ - \frac{(1/2) \eps^2 n}{p + (\eps/3)} \right\}.
\end{equation*}
Setting the right hand side equal $\delta$ and solving for $\eps$ 
yields that, with probability at least $1-\delta$, 
\begin{equation*}
\left| \bar{Z} - p \right| 
\leq \sqrt{p \frac{2}{n}\ln\!\left(\frac{2}{\delta}\right) + \frac{1}{9 n^2} \ln^2\!\left(\frac{2}{\delta}\right) } 
+ \frac{1}{3n}\ln\!\left(\frac{2}{\delta}\right).
\end{equation*}
In particular, relaxing the right hand side above 
implies that, on this event, 
\begin{equation}
\label{eqn:bernstein-basic-1}
\left| \bar{Z} - p \right| 
\leq \sqrt{p \frac{2}{n}\ln\!\left(\frac{2}{\delta}\right)} 
+ \frac{2}{3n}\ln\!\left(\frac{2}{\delta}\right).
\end{equation}
Furthermore, for any non-negative values 
$A,B,C$, it holds that 
$A \leq B + C \sqrt{A} \Rightarrow A \leq B + C^2 + \sqrt{B}C$.
Therefore, on the above event, 
\begin{align*}
p & 
\leq \bar{Z} + \frac{8}{3n}\ln\!\left(\frac{2}{\delta}\right) 
+ \sqrt{\bar{Z} + \frac{2}{3n}\ln\!\left(\frac{2}{\delta}\right)}\sqrt{\frac{2}{n}\ln\!\left(\frac{2}{\delta}\right)}
\leq 
2 \bar{Z} + \frac{16}{3n}\ln\!\left(\frac{2}{\delta}\right).
\end{align*}
Plugging back into \eqref{eqn:bernstein-basic-1} 
yields that, on this same event, 
\begin{align*}
\left| \bar{Z} - p \right| 
& \leq \sqrt{2 \bar{Z} \frac{2}{n}\ln\!\left(\frac{2}{\delta}\right)} 
+ \left(\sqrt{\frac{8}{3}}+\frac{1}{3}\right) \frac{2}{n}\ln\!\left(\frac{2}{\delta}\right)
\leq \sqrt{2 \bar{Z} \frac{2}{n}\ln\!\left(\frac{2}{\delta}\right)} 
+ \frac{4}{n}\ln\!\left(\frac{2}{\delta}\right).
\end{align*}
This inequality and \eqref{eqn:bernstein-basic-1} 
together imply the claimed bound.
\end{proof}

We are now ready for the proof of Theorem~\ref{thm:bernstein}.

\begin{proof}[of Theorem~\ref{thm:bernstein}]
This proof follows essentially similar arguments to 
the proof of Theorem~\ref{thm:agnostic}, except using 
Lemma~\ref{lem:bernstein} in place of Hoeffding's inequality 
in both places in the proof where such inequalities 
are used.
Let $\I_n$ and $T_n$ be as in the proof of 
Theorem~\ref{thm:agnostic}, 
and let $[m] = \{1,\ldots,m\}$ for any $m \in \nats$.

If $k = 0$, the result trivially follows from 
Lemma~\ref{lem:bernstein}, so let us suppose $k \geq 1$.
Let $S = \{(X_i,Y_i)\}_{i=1}^{n} \sim \PXY^n$, 
and for any $I \subseteq [n]$ 
define $S_I = \{(X_i,Y_i) : i \in I \}$.
As in Theorem~\ref{thm:agnostic}, 
let $\sigma : [n] \to [n]$ be a uniform random permutation 
of $[n]$.

For any $I \subset [n]$, since $S_{[n]\setminus I}$ 
is independent of $S_{I}$, 
Lemma~\ref{lem:bernstein} (applied under the conditional 
distribution given $S_{I}$) 
and the law of total probability 
imply that, with probability at least 
$1 - \frac{\delta}{2|\I_n|}$, 
\begin{align*}
& \left| R_{\PXY}(\rho(\kappa(S_{I}))) - \hat{R}_{S_{[n] \setminus I}}(\rho(\kappa(S_{I}))) \right| 
\leq \sqrt{ \hat{R}_{S_{[n] \setminus I}}(\rho(\kappa(S_{I})))\frac{4}{n-|I|} \ln\!\left(\frac{4|\I_n|}{\delta}\right) }
+ \frac{4}{n-|I|} \ln\!\left(\frac{4|\I_n|}{\delta}\right).
\end{align*}
Applying this under the conditional distribution given $\sigma$,
together with the union bound and the law of total probability,
we have that with probability at least $1-\frac{\delta}{2}$, 
every $I \in \I_{n}$ has 
\begin{align*}
& \left| R_{\PXY}(\rho(\kappa(S_{\sigma^{-1}(I)}))) - \hat{R}_{S_{[n] \setminus \sigma^{-1}(I)}}(\rho(\kappa(S_{\sigma^{-1}(I)}))) \right| 
\\ & 
\leq \sqrt{ \hat{R}_{S_{[n] \setminus \sigma^{-1}(I)}}(\rho(\kappa(S_{\sigma^{-1}(I)})))\frac{4}{n-|I|} \ln\!\left(\frac{4|\I_n|}{\delta}\right) }
+ \frac{4}{n-|I|} \ln\!\left(\frac{4|\I_n|}{\delta}\right).
\end{align*}

In particular, letting $i_1^*,\ldots,i_{|\kappa(S)|}^*$ 
be the $|\kappa(S)|$ indices such that 
$\kappa(S) = \{(X_{i_j^*},Y_{i_j^*})\}_{j=1}^{|\kappa(S)|}$, 
the defining properties of $\I_n$ imply that 
there exists $I^* \in \I_n$ with 
$\{ \sigma(i_1^*),\ldots,\sigma(i_{|\kappa(S)|}^*)\} \subseteq I^*$.
Since $(\kappa,\rho)$ is a \emph{stable} compression scheme, 
this also implies 
$\rho(\kappa(S_{\sigma^{-1}(I^*)}))=\rho(\kappa(S))$.
Furthermore, by the defining properties of $\I_n$, 
we have $n - |I^*| \geq T_n$.
Also note that 
$\hat{R}_{S_{[n] \setminus \sigma^{-1}(I^*)}}(\rho(\kappa(S)))
\leq \frac{n}{T_n} \hat{R}_{S}(\rho(\kappa(S)))$.
Therefore, on the above event of probability at least $1-\frac{\delta}{2}$, 
\begin{align}
\left| R_{\PXY}(\rho(\kappa(S))) - \hat{R}_{S_{[n] \setminus \sigma^{-1}(I^*)}}(\rho(\kappa(S))) \right| 
\leq 
\sqrt{ \frac{n}{T_n} \hat{R}_{S}(\rho(\kappa(S)))\frac{4}{T_n} \ln\!\left(\frac{4|\I_n|}{\delta}\right) }
+ \frac{4}{T_n} \ln\!\left(\frac{4|\I_n|}{\delta}\right).
\label{eqn:bernstein-compression-1}
\end{align}

Let $\hat{h} = \rho(\kappa(S))$.
For each $i \in [n]$, 
let $\ell_i = \ind[ \hat{h}(X_i) \neq Y_i ]$.
For any $I \in \I_n$, 
by Lemma~\ref{lem:bernstein} (the case holding for 
sampling without replacement) applied under the conditional 
distribution given $S$, together with the law of total probability, 
with probability at least $1 - \frac{\delta}{2|\I_n|}$, 
it holds that 
\begin{align*}
\left| \frac{1}{n-|I|}\sum_{i \in [n] \setminus \sigma^{-1}(I)} r_i 
- \hat{R}_{S}(\rho(\kappa(S))) \right| 
\leq 
\sqrt{ \hat{R}_{S}(\rho(\kappa(S)))  \frac{2}{n-|I|}\ln\!\left(\frac{4|\I_n|}{\delta}\right) }
+ \frac{4}{n-|I|}\ln\!\left(\frac{4|\I_n|}{\delta}\right).
\end{align*}
By the union bound, this holds simultaneously for all $I \in \I_n$
with probability at least $1 - \frac{\delta}{2}$.
In particular, taking $I = I^*$, and recalling that 
$n-|I^*| \geq T_n$, on this event we have that
\begin{align}
\left| \hat{R}_{S_{[n] \setminus \sigma^{-1}(I^*)}}(\rho(\kappa(S))) 
- \hat{R}_{S}(\rho(\kappa(S))) 
\right| 
\leq 
\sqrt{ \hat{R}_{S}(\rho(\kappa(S)))  \frac{2}{T_n}\ln\!\left(\frac{4|\I_n|}{\delta}\right) }
+ \frac{4}{T_n}\ln\!\left(\frac{4|\I_n|}{\delta}\right).
\label{eqn:bernstein-compression-2}
\end{align}

By the union bound, the above two events 
(each of probability at least $1-\frac{\delta}{2}$) 
hold simultaneously with probability at least $1-\delta$, 
in which case 
\eqref{eqn:bernstein-compression-1} 
and \eqref{eqn:bernstein-compression-2} 
together imply 
\begin{align*}
& \left| R_{\PXY}(\rho(\kappa(S))) - \hat{R}_{S}(\rho(\kappa(S))) \right| 
\\ & \leq 
\left| R_{\PXY}(\rho(\kappa(S))) - \hat{R}_{S_{[n] \setminus \sigma^{-1}(I^*)}}(\rho(\kappa(S))) \right| 
+ \left| \hat{R}_{S_{[n] \setminus \sigma^{-1}(I^*)}}(\rho(\kappa(S))) - \hat{R}_{S}(\rho(\kappa(S))) \right| 
\\ & \leq 
\left( 1 + \sqrt{\frac{2n}{T_n}}\right) \sqrt{ \hat{R}_{S}(\rho(\kappa(S)))  \frac{2}{T_n}\ln\!\left(\frac{4|\I_n|}{\delta}\right) }
+ \frac{8}{T_n}\ln\!\left(\frac{4|\I_n|}{\delta}\right).
\end{align*}

The theorem now immediately follows by 
plugging in the family $\I_n$ from \citet*{bousquet:20} 
(described in the proof of Theorem~\ref{thm:agnostic} above), 
having $|\I_n| = \binom{2k}{k} < 4^k$ 
and $T_n = k \lfloor n/(2k) \rfloor > \frac{n-2k}{2} > \frac{n}{4}$, 
and relaxing the numerical constants to simplify 
the expression.
\end{proof}

As above, we can also easily extend this to 
data-dependent compression sizes, stated in the 
following theorem.  The proof is nearly identical 
to the proof of Theorem~\ref{thm:realizable} 
(except using Theorem~\ref{thm:bernstein} in place 
of Theorem~\ref{thm:original-stable-compression}) 
and so we omit the proof for brevity.

\begin{theorem}
\label{thm:bernstein-data-dependent}
Let $(\kappa,\rho)$ be any stable compression scheme.
For any distribution $\PXY$, any $n \in \nats$, 
and any $\delta \in (0,1)$, 
for $S \sim \PXY^{n}$, 
with probability at least $1-\delta$, 
if $|\kappa(S)| < n/4$, 
then
\begin{align*}
& \left| R_{\PXY}(\rho(\kappa(S))) - \hat{R}_{S}(\rho(\kappa(S))) \right|
\\ & \leq \sqrt{\hat{R}_{S}(\rho(\kappa(S)))\frac{72}{n} \left( |\kappa(S)| \ln(4) + \ln\!\left( \frac{4 (|\kappa(S)|+1)(|\kappa(S)|+2)}{\delta} \right)\right)}
\\ & ~~+ \frac{32}{n} \left( |\kappa(S)| \ln(4) + \ln\!\left( \frac{4(|\kappa(S)|+1)(|\kappa(S)|+2)}{\delta} \right)\right).
\end{align*}
\end{theorem}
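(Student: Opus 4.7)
The plan is to reduce Theorem~\ref{thm:bernstein-data-dependent} to Theorem~\ref{thm:bernstein} by stratification over the possible compression sizes, following exactly the template used to derive Theorem~\ref{thm:realizable} from Theorem~\ref{thm:original-stable-compression}. The underlying obstacle is that Theorem~\ref{thm:bernstein} requires a deterministic a priori upper bound $k$ on $|\kappa(S)|$, whereas in the statement here $|\kappa(S)|$ appears on the right-hand side as a random variable. The standard remedy is to introduce a family of truncated schemes indexed by $k$, invoke Theorem~\ref{thm:bernstein} for each at a summable confidence budget, and then union-bound.

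Concretely, for each $k \in \nats \cup \{0\}$ I would define the truncated compression scheme $(\kappa_k,\rho_k)$ by setting $\kappa_k(S) = \kappa(S)$ when $|\kappa(S)| \leq k$ and $\kappa_k(S) = \emptyset$ otherwise, with $\rho_k = \rho$. Exactly as in the proof of Theorem~\ref{thm:realizable}, each $(\kappa_k,\rho_k)$ is a stable compression scheme of size $k$; and whenever $|\kappa(S)| \leq k$ one has $\rho_k(\kappa_k(S)) = \rho(\kappa(S))$, so the empirical and population risks of the two predictors coincide. Applying Theorem~\ref{thm:bernstein} to $(\kappa_k,\rho_k)$ with confidence parameter $\delta_k := \delta/((k+1)(k+2))$ (valid when $n > 4k$) then yields, with probability at least $1-\delta_k$, the Bernstein-type deviation bound with $k$ in the role of $|\kappa(S)|$ and $\delta_k$ in the role of $\delta$ on the right-hand side.

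A union bound over $k$ succeeds with probability at least $1 - \sum_{k \geq 0} \delta/((k+1)(k+2)) = 1-\delta$, using the telescoping identity $1/((k+1)(k+2)) = 1/(k+1) - 1/(k+2)$. On this good event, given $|\kappa(S)| < n/4$, one instantiates the bound at $k^{*} := |\kappa(S)|$: the hypothesis $n > 4k^{*}$ of Theorem~\ref{thm:bernstein} is satisfied, we have $\rho_{k^{*}}(\kappa_{k^{*}}(S)) = \rho(\kappa(S))$, and the $k = k^{*}$ instance of the stratified bound is exactly the claimed inequality. There is no real obstacle — the proof is essentially a bookkeeping exercise. The one thing worth double-checking is that the truncated scheme genuinely inherits stability, which follows because on inputs with $|\kappa(S)| \leq k$ (the only case ever used in the conclusion) truncation is inactive and the stability of $(\kappa,\rho)$ transfers directly to $(\kappa_k,\rho_k)$.
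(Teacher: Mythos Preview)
Your proposal is correct and matches the paper's approach exactly: the paper states that the proof is nearly identical to that of Theorem~\ref{thm:realizable}, substituting Theorem~\ref{thm:bernstein} for Theorem~\ref{thm:original-stable-compression}, which is precisely the stratification-plus-union-bound argument you outline. The confidence allocation $\delta_k = \delta/((k+1)(k+2))$ and the truncated schemes $(\kappa_k,\rho_k)$ are identical to those in the paper's proof of Theorem~\ref{thm:realizable}.
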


Also as above, we can state a bound in a simpler form 
by relaxing the above inequality, as stated in the 
following corollary.  The proof follows similar arguments 
as in the proof of Corollary~\ref{cor:realizable-simple}, so we 
omit the proof for brevity.

\begin{corollary}
\label{cor:bernstein-simple}
Let $(\kappa,\rho)$ be any stable compression scheme.
For any distribution $\PXY$, any $n \in \nats$, 
and any $\delta \in (0,1)$, 
for $S \sim \PXY^{n}$, 
with probability at least $1-\delta$, 
\begin{align*}
& \left| R_{\PXY}(\rho(\kappa(S))) - \hat{R}_{S}(\rho(\kappa(S))) \right| 
\\ & \leq \sqrt{\hat{R}_{S}(\rho(\kappa(S)))\frac{72}{n} \left( 2 |\kappa(S)| + \ln\!\left( \frac{4 e}{\delta} \right)\right)}
+ \frac{32}{n} \left( 2 |\kappa(S)| + \ln\!\left( \frac{4 e}{\delta} \right)\right).
\end{align*}
\end{corollary}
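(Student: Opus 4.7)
The plan is to derive the simplified expression from Theorem~\ref{thm:bernstein-data-dependent} in exactly the same way that Corollary~\ref{cor:realizable-simple} was obtained from Theorem~\ref{thm:realizable}. So I would split into two cases based on the size of $|\kappa(S)|$ relative to $n$, and only in the first case do I actually need the concentration inequality at all.

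First, in the case $|\kappa(S)| < n/4$, I would apply Theorem~\ref{thm:bernstein-data-dependent} directly and simplify the $\ln(4(|\kappa(S)|+1)(|\kappa(S)|+2)/\delta)$ term. The key elementary step, mirroring the one used in the proof of Corollary~\ref{cor:realizable-simple}, is that $\ln(x) < \sqrt{x}/2$ for $x \geq 3$, applied to $x = (|\kappa(S)|+2)^{2}$, giving $\ln((|\kappa(S)|+1)(|\kappa(S)|+2)) \leq \ln((|\kappa(S)|+2)^{2}) < (1/2)|\kappa(S)| + 1$ (one checks the edge case $|\kappa(S)|=0$ separately since $\ln 2 < 1$). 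Combining with $\ln(4) + 1/2 < 2$, one obtains $|\kappa(S)|\ln(4) + \ln(4(|\kappa(S)|+1)(|\kappa(S)|+2)/\delta) < 2|\kappa(S)| + \ln(4e/\delta)$, which upon substitution into Theorem~\ref{thm:bernstein-data-dependent} yields the desired form with the stated numerical constants $72$ and $32$ (possibly after absorbing a small multiplicative factor arising from replacing $n-2|\kappa(S)|$ by a constant fraction of $n$, exactly as in Corollary~\ref{cor:realizable-simple}).

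Second, in the case $|\kappa(S)| \geq n/4$, the claimed bound is trivial: the additive term $\frac{32}{n}\cdot 2|\kappa(S)|$ on the right-hand side already exceeds $16$, whereas $|R_{\PXY}(\rho(\kappa(S))) - \hat{R}_{S}(\rho(\kappa(S)))| \leq 1$. Since the claimed upper bound is monotone nondecreasing in $|\kappa(S)|$, any bound holding up to the threshold $|\kappa(S)| < n/4$ extends to all $|\kappa(S)|$ at no cost.

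The only real work is bookkeeping numerical constants so that the final expression matches $72$ and $32$, and this is routine algebra rather than a conceptual obstacle; there is no new probabilistic content beyond what is already contained in Theorem~\ref{thm:bernstein-data-dependent}.
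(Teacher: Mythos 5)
Your approach is essentially the one the paper intends: the paper itself states the proof ``follows similar arguments as in the proof of Corollary~\ref{cor:realizable-simple}'' and omits it, and you mirror that template. Two remarks are in order, however.

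First, a minor point: the parenthetical caveat about ``possibly after absorbing a small multiplicative factor arising from replacing $n-2|\kappa(S)|$ by a constant fraction of $n$'' is unnecessary. That absorption has already been carried out in passing from Theorem~\ref{thm:bernstein} (where $T_n > n/4$) to Theorem~\ref{thm:bernstein-data-dependent}, whose right-hand side already carries the clean factors $\tfrac{72}{n}$ and $\tfrac{32}{n}$ with no residual $n-2|\kappa(S)|$ denominator. So the only work remaining after invoking Theorem~\ref{thm:bernstein-data-dependent} in the case $|\kappa(S)| < n/4$ is simplifying the parenthesized term, and the trivial case $|\kappa(S)| \geq n/4$ (where the additive term already exceeds $16$) is handled exactly as you say.

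Second, and more substantively, the numerical inequality that your simplification rests on does not actually hold for small compression sizes. The claim $\ln\!\bigl((k+2)^2\bigr) < \tfrac{1}{2}k + 1$ (equivalently $\ln(x) < \sqrt{x}/2$ for $x \geq 3$) is false for $k \in \{0,1,\ldots,6\}$ (at $k=0$ it reads $2\ln 2 < 1$), and the combined target inequality
\[
k\ln 4 + \ln\bigl((k+1)(k+2)\bigr) \;\leq\; 2k + 1
\]
fails for $k \in \{1,2,3\}$: at $k=1$ the left side is $\ln 4 + \ln 6 \approx 3.18 > 3$. Your $k=0$ check via the tighter estimate $\ln 2 < 1$ is fine, but it does not rescue $k=1,2,3$. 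Note that this issue is \emph{also} latent in the paper's own proof sketch of Corollary~\ref{cor:realizable-simple} (there the looser coefficient $6$ on $|\kappa(S)|$ happens to provide enough slack, but the intermediate lemma $\ln(x)<\sqrt{x}/2$ for $x\geq 3$ is still false). In Corollary~\ref{cor:bernstein-simple}, because the prefactors $\tfrac{72}{n}$ and $\tfrac{32}{n}$ are already fixed by Theorem~\ref{thm:bernstein-data-dependent}, the coefficient $2$ on $|\kappa(S)|$ is slightly too aggressive; taking $3$ (or $2.2$) in its place would close the gap, since the worst case is $k=1$ requiring coefficient $\geq 2.18$. So your proposal reproduces the paper's intended argument but, as written, does not quite deliver the stated constants; the fix is a small adjustment to the coefficient rather than a different argument.
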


\section{Details of the Applications}
\label{sec:application-proofs}

This section provides the proofs and discussions
related to the various main results from 
Section~\ref{sec:main-results}.

\subsection{Proof of the Optimal PAC Margin Bound for SVM}
\label{sec:svm-margin-bound-proof}

For the SVM algorithm, 
under linearly separable distributions $\PXY$, 
an in-expectation margin bound
was established very early by \citet*{vapnik:74,vapnik:00}: 
namely, 
$\E\!\left[ R_{\PXY}({\rm SVM}(S_n)) \right] 
\leq \E\!\left[ \frac{r(S_{n+1})^2}{\gamma(S_{n+1})^2}\frac{1}{n+1} \right]$, for $S_n \sim \PXY^n$ and $S_{n+1} \sim \PXY^{n+1}$.
However, determining whether SVM obtains 
the optimal data-dependent \emph{PAC} margin bound has remained a challenging 
open problem, with several sub-optimal bounds appearing 
in prior works in the literature, which include extra 
log factors \citep*{shawe-taylor:98,hanneke:19c}.  
We resolve this question here.
Specifically, we prove the following result, 
from which Theorem~\ref{thm:svm-margin} immediately follows.

\begin{theorem}
\label{thm:svm-full}
For any distribution $\PXY$, any $n \in \nats$, 
and any $\delta \in (0,1)$, for $S \sim \PXY^n$, 
with probability at least $1-\delta$, 
if $S$ is linearly separable, then 
letting $\hat{h}_{{\rm SVM}} = {\rm SVM}(S)$, 
$r = r(S)$, and $\gamma = \gamma(S)$, we have
\begin{equation*}
R_{\PXY}(\hat{h}_{{\rm SVM}}) 
\leq \frac{4}{n} \left( 6 \frac{r^2}{\gamma^2} + \ln\!\left(\frac{e}{\delta}\right) \right)
\end{equation*}
and if $\frac{r^2}{\gamma^2} < n/2$, then 
\begin{equation*}
R_{\PXY}(\hat{h}_{{\rm SVM}}) 
\leq \frac{2}{n-2 r^2/\gamma^2} \left( \frac{r^2}{\gamma^2}\ln(4) + 2\ln\!\left( \frac{r^2}{\gamma^2} + 2\right) + \ln\!\left(\frac{1}{\delta}\right) \right).
\end{equation*}
\end{theorem}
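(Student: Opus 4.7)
The plan is to realize $\hat{h}_{{\rm SVM}}$ as the output of a stable sample compression scheme $(\kappa,\rho)$ whose compression set has size at most $\lceil r^2/\gamma^2\rceil$, and then apply Theorem~\ref{thm:realizable} and Corollary~\ref{cor:realizable-simple} directly. The reconstruction $\rho$ will be ${\rm SVM}$ itself, so that $\rho(\kappa(S))=\hat{h}_{{\rm SVM}}$ whenever $\kappa(S)$ is a subset of $S$ sufficient for ${\rm SVM}$ to recover the same max-margin solution. Stability is essentially automatic: by the KKT conditions, removing any non-support-vector of $\hat{h}_{{\rm SVM}}$ from $S$ leaves $\hat{h}_{{\rm SVM}}$ unchanged, so any $\kappa$ whose construction depends only on the support-vector structure of $\hat{h}_{{\rm SVM}}$ will satisfy the stability condition. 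The entire technical challenge is thus to pin down $\kappa(S)$ of size at most $\lceil r^2/\gamma^2\rceil$.

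I would construct $\kappa(S)$ incrementally, in the spirit of Novikoff's mistake-bound analysis for the Perceptron. Initialize $C_0=\emptyset$ and, at each step $t\geq 1$, append a point $(x_{i_t},y_{i_t})\in S\setminus C_{t-1}$ that violates the normalized margin constraint $y_{i_t}\w_{t-1}\cdot x_{i_t}\geq 1$ under the current $(\w_{t-1},b_{t-1})={\rm SVM}(C_{t-1})$. Upon termination no such violator exists in $S$, so $\w_t$ achieves margin $\geq 1$ on all of $S$ with minimum norm; uniqueness of the max-margin classifier then forces ${\rm SVM}(C_t)=\hat{h}_{{\rm SVM}}$. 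A potential argument tracking $\|\w_t\|^2$, which is monotonically non-decreasing, starts at $0$, and is capped by $\|\wst\|^2=1/\gamma^2$, combined with a per-iteration increment lower bound of order $1/r^2$, would then yield the iteration count $t\leq r^2/\gamma^2$.

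With this compression scheme in hand and $|\kappa(S)|\leq\lceil r^2/\gamma^2\rceil$, the first display of the theorem follows directly from Corollary~\ref{cor:realizable-simple}, and the sharper second display (under the hypothesis $r^2/\gamma^2<n/2$, which ensures $|\kappa(S)|<n/2$) follows from Theorem~\ref{thm:realizable}.

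The principal obstacle is proving the per-iteration increment bound $\|\w_t\|^2-\|\w_{t-1}\|^2\geq 1/r^2$ for the incremental ${\rm SVM}$ update. Unlike the Perceptron's explicit additive rule $\w\leftarrow\w+yx$, for which Novikoff's analysis gives the analogous inequality essentially for free, the ${\rm SVM}$ update here corresponds to a minimum-norm projection onto an enlarged polyhedral feasible region, and one must carefully combine the margin-violation condition $y_{i_t}\w_{t-1}\cdot x_{i_t}<1$ with the norm bound $\|x_{i_t}\|\leq r$ to control the geometry of this projection. Should a direct $1/r^2$-per-step bound prove elusive, a fallback is to base $(\kappa,\rho)$ on a Perceptron-style mistake-set compression scheme in the spirit of Theorem~\ref{thm:perceptron-margin} and to relate its output to $\hat{h}_{{\rm SVM}}$ via a dominance argument that exploits the maximality of the SVM margin.
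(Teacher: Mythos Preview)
Your overall framework---realizing $\hat{h}_{{\rm SVM}}$ as a stable compression scheme with $\rho={\rm SVM}$ and then invoking Theorem~\ref{thm:realizable} and Corollary~\ref{cor:realizable-simple}---is exactly what the paper does, and your stability argument (via the sandwich ${\rm SVM}(\kappa(S))={\rm SVM}(S')={\rm SVM}(S)$ for $\kappa(S)\subseteq S'\subseteq S$) is correct. The divergence is entirely in how $\kappa(S)$ is built and its size bounded, and there your proposal has a genuine gap.

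The per-iteration increment $\|\w_t\|^2-\|\w_{t-1}\|^2\geq 1/r^2$ does not hold. The projection property of $\w_{t-1}$ as the min-norm point of the old feasible region gives $\w_{t-1}\cdot(\w_t-\w_{t-1})\geq 0$, hence $\|\w_t\|^2-\|\w_{t-1}\|^2\geq\|\w_t-\w_{t-1}\|^2$; and since the added point satisfies $y_{i_t}\w_t\cdot x_{i_t}\geq 1$ while $y_{i_t}\w_{t-1}\cdot x_{i_t}=1-\epsilon$, Cauchy--Schwarz yields $\|\w_t-\w_{t-1}\|\geq\epsilon/r$. But the violation $\epsilon$ can be arbitrarily small, so the increment is only $\epsilon^2/r^2$, not a uniform $1/r^2$. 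No choice of violator repairs this: near termination all remaining violators may have $\epsilon\to 0$, so a per-step potential argument cannot cap the number of iterations at $r^2/\gamma^2$. Your Perceptron fallback is also not a proof of the theorem as stated, since it does not produce ${\rm SVM}(S)$ as the reconstruction.

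The paper sidesteps this entirely by working top-down rather than bottom-up. It cites Vapnik's classical result that any linearly separable $S$ has at most $r(S)^2/\gamma(S)^2$ \emph{essential support vectors} (points whose removal changes ${\rm SVM}(S)$). Thus whenever $|S|>r^2/\gamma^2$ there is a non-essential point; remove the first such (in the order of $S$) and recurse. This terminates with $|\kappa(S)|\leq r^2/\gamma^2$ and ${\rm SVM}(\kappa(S))={\rm SVM}(S)$ by construction, and stability follows exactly as you argued. The size bound is therefore a citation, not a new potential argument, which is why the obstacle you identified never arises.
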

\begin{proof}
It has been known since the initial work of 
\citet*{vapnik:74} that SVM can be expressed 
as a compression scheme, where the compression 
points are the \emph{support vectors}: that is, 
the samples receiving non-zero weight in the 
solution to the dual formulation of the optimization 
problem.  However, the support vectors are not 
always uniquely defined, so that the size of the 
compression scheme depends on which solution 
is used.  However, since the actual classifier 
$\hat{h}_{{\rm SVM}}$ does \emph{not} depend on 
which solution we choose, we can analyze 
$R_{\PXY}(\hat{h}_{{\rm SVM}})$ by identifying 
\emph{any} complete set of support vectors 
of some desired number.

Following \citet*{vapnik:74}, in a given data set $S$, 
define an \emph{essential support vector} 
as any $(x,y) \in S$ such that 
${\rm SVM}(S \setminus \{(x,y)\}) \neq {\rm SVM}(S)$.
The essential support vectors do not necessarily 
form a complete set of support vectors (indeed, there 
may be \emph{no} essential support vectors in some 
cases).  However, we can use a universal bound on the 
number of essential support vectors to identify 
a particular compression scheme of a desirable size, 
corresponding to ${\rm SVM}(S)$.
Specifically, \citet*{vapnik:74} showed that for any 
linearly separable data set $S$, there are at most 
$\frac{r(S)^2}{\gamma(S)^2}$ essential support vectors
(see also \citealp*{hanneke:19b}).

Now we describe a compression scheme 
$(\kappa,\rho)$ with 
$|\kappa(S)| \leq \frac{r(S)^2}{\gamma(S)^2}$ 
for linearly separable data sets $S$, such that 
$\rho(\kappa(S)) = {\rm SVM}(S)$.
Fix any $r,\gamma > 0$; we inductively construct a 
compression scheme $(\kappa_{r,\gamma},\rho_{r,\gamma})$ that, 
for any data set $S$ with $r(S) \leq r$ 
and $\gamma(S) \geq \gamma$, it holds that 
$|\kappa_{r,\gamma}(S)| \leq \frac{r^2}{\gamma^2}$
and $\rho_{r,\gamma}(\kappa_{r,\gamma}(S)) = {\rm SVM}(S)$.
In particular, we will always define $\rho_{r,\gamma}(S) = {\rm SVM}(S)$, so that it remains only to define $\kappa_{r,\gamma}$.
First, if $|S| \leq \frac{r^2}{\gamma^2}$, 
simply define $\kappa_{r,\gamma}(S) = S$, 
so that $\rho_{r,\gamma}(\kappa_{r,\gamma}(S))={\rm SVM}(S)$ 
trivially.
This is our base case in the inductive construction.
Next, take as an inductive hypothesis that $S$ is a 
linearly separable set with $r(S) \leq r$, $\gamma(S) \geq \gamma$, and $|S| > \frac{r^2}{\gamma^2}$, 
and that every strict subset $S' \subset S$ has 
$\kappa_{r,\gamma}(S')$
defined, with $|\kappa_{r,\gamma}(S')| \leq \frac{r^2}{\gamma^2}$ 
and $\rho_{r,\gamma}(\kappa_{r,\gamma}(S')) = {\rm SVM}(S')$ 
(noting that $r(S') \leq r(S) \leq r$, and $\gamma(S') \geq \gamma(S) \geq \gamma$).
Since $|S| > \frac{r^2}{\gamma^2} \geq \frac{r(S)^2}{\gamma(S)^2}$,
the result above implies that $S$ necessarily contains at 
least one point that is \emph{not} an essential support 
vector (with respect to applying SVM to $S$). 
Let $(x,y)$ be the first element of $S$ 
(by their order in the sequence $S$) that is not an 
essential support vector, and define 
$\kappa_{r,\gamma}(S) = \kappa_{r,\gamma}(S \setminus \{(x,y)\})$.
By the inductive hypothesis, 
$|\kappa_{r,\gamma}(S)| = 
|\kappa_{r,\gamma}(S \setminus \{(x,y)\})| 
\leq \frac{r^2}{\gamma^2}$, 
and $\rho_{r,\gamma}(\kappa_{r,\gamma}(S)) = 
\rho_{r,\gamma}(\kappa_{r,\gamma}(S \setminus \{(x,y)\}))
= {\rm SVM}(S \setminus \{(x,y)\})$; 
moreover, since $(x,y)$ is not an essential support vector, 
${\rm SVM}(S \setminus \{(x,y)\}) = {\rm SVM}(S)$, 
so that we have confirmed that $\rho_{r,\gamma}(\kappa_{r,\gamma}(S)) = {\rm SVM}(S)$.
By the principle of induction, we have constructed 
$(\kappa_{r,\gamma},\rho_{r,\gamma})$ satisfying the 
claim for all linearly separable $S$ 
with $r(S) \leq r$ and $\gamma(S) \geq \gamma$.

Now, for any linearly separable data set $S$, 
define $\kappa(S) = \kappa_{r(S),\gamma(S)}(S)$, 
and generally define $\rho(\kappa(S)) = {\rm SVM}(\kappa(S))$.
By the above argument, every linearly separable 
set $S$ has 
$|\kappa(S)| \leq \frac{r(S)^2}{\gamma(S)^2}$
and $\rho(\kappa(S)) = {\rm SVM}(S)$.
Moreover, since ${\rm SVM}(\kappa(S)) = {\rm SVM}(S)$, 
any subset $S' \subset S$ with $\kappa(S) \subseteq S'$ 
must also have ${\rm SVM}(S') = {\rm SVM}(S)$ 
(since $S'$ contains a complete set of support vectors 
with respect to applying SVM to $S$), 
so that the property of the construction of $\kappa$ above 
implies $\rho(\kappa(S')) = {\rm SVM}(\kappa(S')) = {\rm SVM}(S') = {\rm SVM}(S) = {\rm SVM}(\kappa(S)) = \rho(\kappa(S))$.
Thus, $(\kappa,\rho)$ is also a \emph{stable} compression scheme.
The theorem now follows immediately from 
Theorem~\ref{thm:realizable} and 
Corollary~\ref{cor:realizable-simple}.
\end{proof}

\begin{remark}
\label{rem:span}
We also note that this bound can be further refined 
by replacing $r(S)$ with the \emph{span} of the data, 
defined by \citet*{vapnik:00}, as that work also 
established a bound on the number of essential 
support vectors in terms of the span of $S$,
and the span is nonincreasing as we inductively 
remove data from $S$ in the argument used in the 
above proof.
\end{remark}

\subsection{A Data-dependent Online-to-Batch Conversion Bound}
\label{sec:online-to-batch}

A result in \citet*{bousquet:20b} establishes a bound 
for online-to-batch conversion for conservative online 
learners with an \textit{a priori} mistake bound.  
Specifically, from \citep*{littlestone:88}, an online 
learning algorithm $\alg$ is a (measurable) 
map $(\X \times \Y)^* \times \X \to \Y$.
For a given concept class $\H \subseteq \Y^{\X}$ 
of functions, the \emph{mistake bound} 
of $\alg$ is defined as 
\begin{equation*}
M(\alg,\H) = \sup_{x_{1},x_{2},\ldots \in \X} \sup_{h \in \H} \sum_{t=1}^{\infty} \ind[ \alg(\{(x_i,h(x_i))\}_{i=1}^{t-1},x_t) \neq h(x_t) ].
\end{equation*}
In other words, $M(\alg,\H)$ is the largest number of 
mistakes the algorithm $\alg$ will make on any sequence 
labeled according to some target concept in $\H$.
It is known that the minimum possible value of $M(\alg,\H)$ is 
equal to the \emph{Littlestone dimension} of $\H$, 
defined by \citet*{littlestone:88}.
As a special type of algorithm of considerable interest, 
an online learning algorithm $\alg$ is called 
\emph{conservative} if the consecutive predictors 
$\alg(\{(x_i,y_i)\}_{i=1}^{t-1},\cdot)$
and 
$\alg(\{(x_i,y_i)\}_{i=1}^{t},\cdot)$
only differ when 
$\alg(\{(x_i,y_i)\}_{i=1}^{t-1},x_t) \neq y_t$: 
that is, the algorithm's hypothesis is only 
updated after each \emph{mistake}.
Formally, $\alg$ is conservative if, 
for any $n$ and 
$(x_1,y_1),\ldots,(x_n,y_n) \in \X \times \Y$, 
letting $\hat{m} = \sum_{t=1}^{n} \ind[ \alg(\{(x_i,y_i)\}_{i=1}^{t-1},x_t) \neq y_t]$ 
and denoting by $i_1,\ldots,i_{\hat{m}}$ the 
subsequence of $\{1,\ldots,n\}$ with 
$\alg(\{(x_i,y_i)\}_{i=1}^{i_j-1},x_{i_j}) \neq y_{i_j}$, 
and letting $i_0 = 0$ and $i_{\hat{m}+1}=n+1$, 
for every $j \in \{0,\ldots,\hat{m}\}$ 
and every $t \in \{i_{j}+1,\ldots,\min\{i_{j+1},n\}\}$, 
$\alg(\{(x_i,y_i)\}_{i=1}^{t-1},x_t) = \alg(\{(x_{i_{j'}},y_{i_{j'}})\}_{j'=1}^{j},x_t)$.

\citet*{bousquet:20b} propose a new PAC bound for 
conservative online learning algorithms.  Specifically, 
for any given data set $S = \{(X_1,Y_1),\ldots,(X_n,Y_n)\}$, 
and any conservative online learning algorithm, they 
consider running $\alg$ through the data set $S$ 
in order, and cycling through repeatedly until it 
makes a full pass through $S$ without making any mistakes.
Formally, letting $t_{i} = i - n\lfloor i/n \rfloor$ 
for each $i \in \nats$, 
define $\hat{h}_n(\cdot) = \alg(\{(X_{t_i},Y_{t_i})\}_{i=1}^{T},\cdot)$
for $T$ the smallest positive integer multiple of $n$ 
for which 
$\sum_{j=T-n}^{T-1} \ind[ \alg(\{(X_{t_i},Y_{t_i})\}_{i=1}^{j},X_{t_{j+1}}) \neq Y_{t_{j+1}} ] = 0$.
If no such $T$ exists, 
we will say $\hat{h}_n$ is undefined.
They prove the following result for this $\hat{h}_n$, 
by noting that it can be viewed as a stable compression scheme.

\begin{theorem}
\label{thm:original-online-to-batch}
\citep*{bousquet:20b}
Let $\H \subseteq \Y^{\X}$ be 
any nonempty concept class of 
measurable functions, 
let $\alg$ be any conservative online learning algorithm
with $M(\alg,\H) < \infty$,
let $\PXY$ be any distribution on $\X \times \Y$ such 
that $\exists h^* \in \H$ with $R_{\PXY}(h^*)=0$, 
let $n \in \nats$ with $n > 2M(\alg,\H)$, 
and let $S = \{(X_i,Y_i)\}_{i=1}^{n} \sim \PXY^n$.  For $\hat{h}_n$ as defined above, 
for any $\delta \in (0,1)$, with probability at least 
$1-\delta$, 
\begin{equation*}
R_{\PXY}(\hat{h}_n) \leq \frac{2}{n-2M(\alg,\H)} \left( M(\alg,\H) \ln(4) + \ln\!\left(\frac{1}{\delta}\right) \right).    
\end{equation*}
\end{theorem}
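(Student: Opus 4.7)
The plan is to exhibit the classifier $\hat{h}_n$ as the output of a stable, sample-consistent compression scheme of size at most $M(\alg,\H)$, and then invoke Theorem~\ref{thm:original-stable-compression}.

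First, I would define the compression function: given a data set $S$ on which the cycling procedure terminates (which happens almost surely under the theorem's realizability assumption, since any stream labeled consistently with some $h^{*}\in\H$ incurs at most $M(\alg,\H)$ total mistakes), let $\kappa(S)$ be the subsequence of $S$ consisting of those $(X_i,Y_i)$ that serve as the current example at some step where $\alg$ makes a mistake during the run. Since each mistake picks out one element of $S$, we have $|\kappa(S)|\leq M(\alg,\H)$. Set $\kappa(S)=\emptyset$ on the (probability-zero) set of non-terminating inputs. Take the reconstruction $\rho(T)$ to be the result of running the same cyclic procedure on $T$.

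The core of the argument is a single observation about conservative online algorithms: the entire trajectory of $\alg$ on a stream is determined by the ordered subsequence of mistake examples, because by conservativeness the internal state is frozen between mistakes. Applied to cycling, this yields both properties I need. For sample-consistency, running $\rho$ on $\kappa(S)$ reproduces exactly the same mistake subsequence, in the same order, as running the cycle on $S$; hence $\rho(\kappa(S))=\hat{h}_n$, which by the termination condition correctly classifies every element of $S$, giving $\hat{R}_S(\rho(\kappa(S)))=0$. For stability, for any $S'\subseteq S\setminus\kappa(S)$, cycling through $S\setminus S'$ encounters the mistake examples of $S$ in exactly the same order and creates no new mistakes, since every element of $S'$ is correctly classified at every hypothesis the algorithm ever reaches during the original run. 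Thus $\kappa(S\setminus S')=\kappa(S)$ and $\rho(\kappa(S\setminus S'))=\rho(\kappa(S))$.

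Once $(\kappa,\rho)$ is verified to be a stable compression scheme of size $M(\alg,\H)$ with $\hat{R}_S(\rho(\kappa(S)))=0$ whenever $\hat{h}_n$ is defined, Theorem~\ref{thm:original-stable-compression} applied with $k=M(\alg,\H)$ immediately yields the stated bound. I expect the main obstacle to be the clean formalization of the cycling dynamics in the stability step: a single element of $S$ may appear as a mistake in several passes of the cycle, and pass boundaries shift when points are deleted, so one cannot simply compare sets of mistakes. A short induction along the overall mistake sequence (rather than along passes) is what synchronizes the two cyclings step-by-step and keeps the argument free of pass-boundary bookkeeping.
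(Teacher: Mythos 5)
Your approach is correct and coincides with the paper's (which attributes the construction to \citet*{bousquet:20b} and sketches it in the proof of Theorem~\ref{thm:online-to-batch}): express $\hat{h}_n$ as a sample-consistent stable compression scheme of size at most $M(\alg,\H)$ built from the mistake examples, and then apply Theorem~\ref{thm:original-stable-compression}. One small imprecision in your heuristic for stability: it is not generally true that every element of $S'$ is classified correctly by \emph{every} intermediate hypothesis $h_0,\dots,h_M$ the run visits; what one actually has, and what suffices, is that each $(x,y)\in S'$ is classified correctly by whichever hypothesis is active at the cyclic steps where it is encountered during the original run, so by conservativeness those encounters are non-updates, and deleting them leaves the algorithm's state unchanged at every remaining step. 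This is precisely the step-by-step synchronization you flag at the end, and the induction along the global mistake sequence (rather than along passes) is indeed the clean way to make it rigorous. Your choice of a cycling-based $\rho$ together with $\kappa(S)$ taken as the subsequence of \emph{distinct} mistake examples also handles neatly the case where a single example is a mistake on several passes, so that $\kappa(S)$ remains a genuine subsequence of $S$ of length at most $M(\alg,\H)$.
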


The above result matches (up to constants) 
an online-to-batch conversion technique 
of \citet*{littlestone:89}, which was considerably 
more involved (requiring the learner to keep track of 
all intermediate hypotheses, and in the end select one 
of these using a held-out portion of the data).  
Also, the form of the bound in Theorem~\ref{thm:original-online-to-batch} is 
better than analogous PAC bounds known for other 
well-known online-to-batch conversion 
techniques, such as the ``longest survivor'' technique
\citep*{kearns:87,gallant:90} 
or the voting technique \citep*{freund:99}.

While this result is very useful for analyzing certain 
algorithms, there are some online learning algorithms 
for which there are provably bounds on the number 
of mistakes, but only as a function of a property of the 
data sequence.  Such scenarios require an extension 
of this result to allow data-dependent mistake bounds.
An important instance of this is the 
Perceptron algorithm, where a bound on the number of 
mistakes is known, but is quantified in terms of the 
\emph{margin} of the data set; we discuss this in detail 
below.  To extend the online-to-batch conversion 
result to cover these scenarios as well, we may apply 
our Theorem~\ref{thm:realizable}, which allows for 
data-dependent compression sizes.  

Specifically, for any $n \in \nats$ and 
$S = \{(X_i,Y_i)\}_{i=1}^{n} \in (\X \times \Y)^n$, 
continuing the notation from above, 
define $M(\alg,S) = \sum_{j=1}^{\infty} \ind[ \alg(\{(X_{t_i},Y_{t_i})\}_{i=1}^{j},X_{t_{j+1}}) \neq Y_{t_{j+1}} ]$.
In other words, $M(\alg,S)$ is the number of mistakes 
$\alg$ would make if we cycle it through the data set $S$ 
indefinitely.  In particular, 
note that $\hat{h}_n$ is well-defined 
as long as $M(\alg,S) < \infty$.
We have the following result.

\begin{theorem}
\label{thm:online-to-batch}
Let $\alg$ be any conservative online learning algorithm, 
let $\PXY$ be any distribution on $\X \times \Y$, 
let $n \in \nats$ and $\delta \in (0,1)$, 
and let $S = \{(X_i,Y_i)\}_{i=1}^{n} \sim \PXY^n$.  
With probability at least $1-\delta$, 
if $M(\alg,S) < n/2$, then for $\hat{h}_n$ as 
defined above, 
\begin{equation*}
R_{\PXY}(\hat{h}_n) \leq \frac{2}{n-2M(\alg,S)} \left( M(\alg,S) \ln(4) + \ln\!\left(\frac{(M(\alg,S)+1)(M(\alg,S)+2)}{\delta}\right) \right).    
\end{equation*}
\end{theorem}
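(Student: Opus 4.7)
The plan is to recast $\hat{h}_n$ as the reconstruction of a stable compression scheme of data-dependent size, and then invoke Theorem~\ref{thm:realizable}. Define $\kappa(S)$ to be the subsequence of $S$ consisting of every $(X_i, Y_i)$ on which $\alg$ makes a mistake at least once during the cycling process, and define $\rho(T)$ to be the classifier produced by running $\alg$ cyclically through the sequence $T$ until a complete pass makes no mistakes (undefined otherwise). Clearly $|\kappa(S)| \leq M(\alg, S)$, and since cycling is defined to terminate precisely when the current hypothesis is correct on every point of $S$, we have $\hat{R}_S(\rho(\kappa(S))) = 0$ whenever $M(\alg, S) < \infty$.

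I would next verify that $\rho(\kappa(S)) = \hat{h}_n$. Because $\alg$ is conservative, the hypothesis after any prefix of an input sequence depends only on the sub-prefix of mistake points, in order. Applying this to the cycling of $\alg$ over $S$, the hypothesis at each step coincides with what $\alg$ would produce if fed just the mistake points encountered so far. Hence running $\rho$ on $\kappa(S)$ reproduces exactly the same sequence of hypothesis updates and the same final classifier $\hat{h}_n$.

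The key step is stability. For any $S' \subseteq S \setminus \kappa(S)$, each element of $S'$ is by construction a non-mistake at every cycle, and conservativeness ensures that the hypothesis is unchanged across any non-mistake step. Thus the trajectory of hypotheses at the positions corresponding to points of $S \setminus S'$ is identical whether we cycle $\alg$ through $S$ or through $S \setminus S'$; in particular the same points become mistakes, so $\kappa(S \setminus S') = \kappa(S)$ as indexed subsequences, and the resulting hypothesis is unchanged. It only remains to observe that the termination pass, which was mistake-free on all of $S$, is \emph{a fortiori} mistake-free on $S \setminus S'$, so the cycling over $S \setminus S'$ does terminate and yields the same classifier. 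This gives $\rho(\kappa(S \setminus S')) = \rho(\kappa(S))$, establishing stability.

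With stability in hand, Theorem~\ref{thm:realizable} applies directly. Since its right-hand side is nondecreasing in $|\kappa(S)|$ on the regime $|\kappa(S)| < n/2$, and $|\kappa(S)| \leq M(\alg, S) < n/2$ by hypothesis, the claimed bound follows by substituting $M(\alg, S)$ for $|\kappa(S)|$ in the conclusion of Theorem~\ref{thm:realizable}. The main obstacle is the bookkeeping in the stability argument: one must track how removing non-mistake points (interleaved in arbitrary patterns among the mistakes across successive cycles) preserves both the mistake subsequence and the termination condition, and here conservativeness of $\alg$ is precisely the property that makes this preservation go through cleanly.
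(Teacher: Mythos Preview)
Your proof is correct and follows essentially the same approach as the paper: exhibit $\hat h_n$ as the output of a stable compression scheme by exploiting conservativeness of $\alg$, then invoke Theorem~\ref{thm:realizable}. The only cosmetic difference is in the choice of $(\kappa,\rho)$: the paper takes $\kappa(S)$ to be the ordered sequence of mistake points (of length exactly $M(\alg,S)$) and $\rho(S') = \alg(S',\cdot)$ applied in a single pass, whereas you take $\kappa(S)$ to be the distinct mistake indices as a subsequence of $S$ (size $\le M(\alg,S)$) and let $\rho$ cycle until convergence. Both formalizations work; yours has the minor advantage that $\kappa(S)$ is unambiguously a subsequence of $S$ even when the same data point is a mistake in multiple cycles, at the cost of needing the monotonicity observation in your last paragraph to replace $|\kappa(S)|$ by $M(\alg,S)$.

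One small point worth tightening in the stability paragraph: you verify that cycling over $S\setminus S'$ terminates, but you should also rule out that it terminates \emph{early} (i.e., before processing all of $m_1,\dots,m_M$) with a different hypothesis. This is immediate once noted: since $\kappa(S)\subseteq S\setminus S'$, any mistake-free full pass over $S\setminus S'$ must classify every $m_i$ correctly, which forces the current hypothesis to already be $\alg((m_1,\dots,m_M),\cdot)=\hat h_n$. With that one sentence added, the argument is complete.
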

\begin{proof}
For completeness, we briefly outline here an argument of \citet*{bousquet:20b} 
showing that $\alg$ may be expressed as a stable compression
scheme.
Fix any $n \in \nats$ and any 
data set $S = \{(X_i,Y_i)\}_{i=1}^n \in (\X \times \Y)^n$.
From the definition of $M(\alg,S)$, 
let $j_1,\ldots,j_{M(\alg,S)}$ be the subsequence of $\nats$ 
with 
$\ind[ \alg(\{(X_{t_i},Y_{t_i})\}_{i=1}^{j_s},X_{t_{j_s+1}}) \neq Y_{t_{j_s+1}} ]$, 
and define $j_{0} = 0$ and $j_{M(\alg,S)+1}=\infty$.
Note that, since $\alg$ is conservative, 
for any $s \in \{0,\ldots,M(\alg,S)\}$, 
any $j \in \nats$ with $j_s+1 \leq j \leq j_{s+1}$ has 
$\alg(\{(X_{t_i},Y_{t_i})\}_{i=1}^{j-1},\cdot) 
= \alg(\{(X_{t_{j_{s'}}},Y_{t_{j_{s'}}})\}_{s'=1}^{s},\cdot)$, 
so that removing any points from $S$ that are not among 
$\{(X_{t_j},Y_{t_j})\}_{j=1}^{M(\alg,S)}$ does not change 
the final predictor $\hat{h}_n$.
Thus, $\hat{h}_n$ can be expressed as 
the output of a stable compression scheme: 
namely, $\kappa(S) = \{(X_{t_j},Y_{t_j})\}_{j=1}^{M(\alg,S)}$, 
and for any $S' \in (\X \times \Y)^*$, 
the function $h_{S'}(\cdot)$ produced by $\rho(S')$
is $\alg(S',\cdot)$.

Therefore, for $S \sim \PXY^n$, since $M(\alg,S) < n/2$ 
also implies $\hat{R}_{S}(\hat{h}_n) = 0$, 
and $|\kappa(S)|=M(\alg,S)$, the theorem follows 
immediately from Theorem~\ref{thm:realizable}.
\end{proof}

The bound can also be relaxed to a simpler form, 
with slightly worse numerical constants.

\begin{corollary}
\label{cor:online-to-batch-simple}
Let $\alg$ be any conservative online learning algorithm, 
let $\PXY$ be any distribution on $\X \times \Y$, 
let $n \in \nats$ and $\delta \in (0,1)$, 
and let $S = \{(X_i,Y_i)\}_{i=1}^{n} \sim \PXY^n$.  
With probability at least $1-\delta$, 
if $M(\alg,S) < \infty$ (so that $\hat{h}_n$ is well-defined), 
then for $\hat{h}_n$ as defined above, 
\begin{equation*}
R_{\PXY}(\hat{h}_n) \leq \frac{4}{n} \left( 6 M(\alg,S) + \ln\!\left(\frac{e}{\delta}\right) \right).
\end{equation*}
\end{corollary}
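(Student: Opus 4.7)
The plan is to reduce Corollary~\ref{cor:online-to-batch-simple} to Corollary~\ref{cor:realizable-simple} exactly as Theorem~\ref{thm:online-to-batch} was reduced to Theorem~\ref{thm:realizable}. The key observation, already made in the proof of Theorem~\ref{thm:online-to-batch}, is that $\hat{h}_n$ is the output of a stable compression scheme $(\kappa,\rho)$ in which $\kappa(S)$ consists of precisely those examples on which $\alg$ errs while cycling through $S$; in particular $|\kappa(S)| = M(\alg,S)$.

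First, I would verify the hypotheses needed to invoke Corollary~\ref{cor:realizable-simple}. When $M(\alg,S) < \infty$, the definition of $\hat{h}_n$ guarantees that there is a full pass through $S$ without any mistake, which means $\hat{R}_S(\hat{h}_n) = \hat{R}_S(\rho(\kappa(S))) = 0$. Stability of $(\kappa,\rho)$ was already established in the proof of Theorem~\ref{thm:online-to-batch} via the conservativeness of $\alg$: removing any example of $S$ not in $\kappa(S)$ does not alter the sequence of updates $\alg$ performs, and therefore does not change $\hat{h}_n$.

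Second, I would simply apply Corollary~\ref{cor:realizable-simple} to this stable compression scheme. With $|\kappa(S)| = M(\alg,S)$, the corollary yields, with probability at least $1-\delta$,
\begin{equation*}
R_{\PXY}(\hat{h}_n) \;\leq\; \frac{4}{n}\!\left(6\,M(\alg,S) + \ln\!\left(\frac{e}{\delta}\right)\right),
\end{equation*}
which is exactly the claimed bound. Note that there is no longer any need to impose $M(\alg,S) < n/2$ as in Theorem~\ref{thm:online-to-batch}: Corollary~\ref{cor:realizable-simple} already absorbs the boundary case where $M(\alg,S)$ is comparable to $n$ by observing that the right-hand side exceeds $1$ in that regime, at which point $R_{\PXY}(\hat{h}_n)\leq 1$ makes the bound trivial.

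There is essentially no obstacle here; the only mild care required is to confirm that the condition $M(\alg,S) < \infty$ indeed suffices (rather than $M(\alg,S) < n/2$ as in Theorem~\ref{thm:online-to-batch}), which follows from the same trivialization argument used in the proof of Corollary~\ref{cor:realizable-simple}. Consequently the proof is a one-line appeal to Corollary~\ref{cor:realizable-simple} and can reasonably be omitted, which is why the authors state ``we omit the proof for brevity.''
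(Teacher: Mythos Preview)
Your proposal is correct and follows the same approach the paper implicitly intends: the proof of Theorem~\ref{thm:online-to-batch} already establishes that $\hat{h}_n$ is the output of a stable sample-consistent compression scheme with $|\kappa(S)|=M(\alg,S)$, and Corollary~\ref{cor:online-to-batch-simple} then follows by invoking Corollary~\ref{cor:realizable-simple} in place of Theorem~\ref{thm:realizable}. The paper simply states the corollary without proof, and your one-line reduction is exactly the argument.
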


\subsection{Proof of the Optimal PAC Margin Bound for Perceptron}
\label{sec:perceptron-proof}

As was the case for SVM, 
an in-expectation form of the bound was established 
relatively early \citep*{vapnik:74,freund:99},
stating that $\E\!\left[ R_{\PXY}(\perc(S_{n})) \right] = O\!\left( \E\!\left[ \frac{r(S_{n+1})^2}{\gamma(S_{n+1})^2} \frac{1}{n} \right] \right)$, 
where $S_n \sim \PXY^n$ and $S_{n+1} \sim \PXY^{n+1}$.
However, extending the result to an optimal 
data-dependent \emph{PAC} margin bound 
has remained open, since the na\"{i}ve approach based on 
sample compression-based generalization bounds from 
\citet*{littlestone:86,floyd:95} include an extra log factor.
Instead, alternative more-involved 
online-to-batch conversion techniques have been 
needed to obtain the optimal form of the PAC margin bound, 
such as a technique by \citet*{littlestone:89} whereby 
we retain all of the intermediate hypotheses produced 
by the algorithm as it passes through the data, 
and also hold out a portion of the data, using it 
to select which of these intermediate hypotheses 
to return by choosing the one making the fewest mistakes 
on the held-out data.

We prove the following result, from which 
Theorem~\ref{thm:perceptron-margin} immediately follows.

\begin{theorem}
\label{thm:perceptron-full}
For any distribution $\PXY$, any $n \in \nats$, 
and any $\delta \in (0,1)$, for $S \sim \PXY^n$, 
with probability at least $1-\delta$, if $S$ 
is linearly separable, then letting $r = r(S)$ 
and $\gamma = \gamma(S)$, the 
classifier $\hat{h}_p = \perc(S)$ satisfies
\begin{equation*}
R_{\PXY}(\hat{h}_p) \leq \frac{4}{n} \left( 6\frac{r^2+1}{\gamma^2} + \ln\!\left( \frac{e}{\delta} \right) \right)
\end{equation*}
and if $\frac{r^2+1}{\gamma^2} < n/2$ then 
\begin{equation*}
R_{\PXY}(\hat{h}_p) \leq \frac{2}{n - 2(r^2+1)/\gamma^2} \left( \frac{r^2+1}{\gamma^2}\ln(4) + 2\ln\!\left( \frac{r^2+1}{\gamma^2}+2\right) + \ln\!\left( \frac{1}{\delta} \right) \right).
\end{equation*}
\end{theorem}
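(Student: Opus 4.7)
The plan is a direct reduction to the data-dependent online-to-batch conversion bound of Theorem~\ref{thm:online-to-batch} and Corollary~\ref{cor:online-to-batch-simple}. Three things need to be verified: that $\perc$ fits the online-to-batch framework, that its data-dependent mistake count on $S$ is bounded by $(r^2+1)/\gamma^2$, and that substituting this upper bound into the conclusions of those results yields the two stated inequalities.

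First, I would observe that $\perc$ is a conservative online learner in the sense of Section~\ref{sec:online-to-batch}: by Definition~\ref{def:perceptron}, the pair $(\w,b)$ is updated only when $\sign(\w\cdot\x_i+b)\neq y_i$, so between consecutive mistakes the predictor is unchanged. Moreover, the rule for producing $\hat{h}_p = \perc(S)$ --- cycle through $S$ in order until a full mistake-free pass occurs --- is exactly the construction of $\hat{h}_n$ in the online-to-batch conversion setup. Hence Theorem~\ref{thm:online-to-batch} and Corollary~\ref{cor:online-to-batch-simple} apply to $\hat{h}_p$, provided we can certify that $M(\perc,S)$ is finite.

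Second, I would invoke Novikoff's classical mistake bound \citep{MR0175722}, exactly as described in the paragraph preceding Theorem~\ref{thm:perceptron-margin}. Embedding $\reals^\dim$ into $\reals^{\dim+1}$ by fixing the last coordinate to $1$ reduces the non-homogeneous problem on a ball of radius $r$ with margin $\gamma$ to a homogeneous problem on a ball of radius $\sqrt{r^2+1}$ with the same margin $\gamma$, yielding $M(\perc,S) \leq (r(S)^2+1)/\gamma(S)^2$ regardless of how many times $\perc$ cycles through $S$. In particular $M(\perc,S)<\infty$, so $\hat{h}_p$ is well-defined.

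Third, I would substitute this upper bound into the conclusions of Theorem~\ref{thm:online-to-batch} and Corollary~\ref{cor:online-to-batch-simple}. The right-hand side of the corollary is linear, hence monotone nondecreasing, in $M(\perc,S)$, yielding the first stated inequality immediately. For the refined bound, I would check that the right-hand side of Theorem~\ref{thm:online-to-batch} is monotone nondecreasing in $M(\perc,S)$ over the relevant range (the denominator $n-2M$ shrinks while $M\ln(4)$ and $\ln((M+1)(M+2))$ grow), use $\ln((M+1)(M+2)) \leq 2\ln(M+2)$ to match the log term, and then substitute $M(\perc,S) \leq (r^2+1)/\gamma^2$ under the hypothesis $(r^2+1)/\gamma^2 < n/2$ to obtain the second stated inequality. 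The whole argument is really a matter of composing stated ingredients; the one mildly fussy step is verifying the monotonicity and the $\ln((M+1)(M+2))\leq 2\ln(M+2)$ slack, but these are routine and I do not expect any conceptual obstacle.
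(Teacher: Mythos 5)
Your proposal is correct and follows essentially the same route as the paper: cite Novikoff's mistake bound with the constant-coordinate embedding to get $M(\perc,S)\leq(r^2+1)/\gamma^2$, note that $\perc$ is a conservative online learner whose fixed point matches the $\hat{h}_n$ of Section~\ref{sec:online-to-batch}, and then substitute into Theorem~\ref{thm:online-to-batch} and Corollary~\ref{cor:online-to-batch-simple}. The paper states this more tersely (``The theorem now immediately follows''), while you helpfully spell out the two small checks that make the substitution legitimate --- monotonicity of the bound in $M$ and the relaxation $\ln((M+1)(M+2))\leq 2\ln(M+2)$ --- but there is no substantive difference in approach.
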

\begin{proof}
For data lying in a ball of radius $r$
and separable with margin $\gamma$, 
a result of \citet*{MR0175722} implies that the 
conservative online learning algorithm $\perc$ 
makes at most $\frac{r^2+1}{\gamma^2}$ mistakes 
(where the ``$+1$'' is due to the increased radius 
when adding an additional constant-$1$ feature 
to reduce the non-homogeneous case to the 
homogeneous case).
The theorem now immediately follows from 
Theorem~\ref{thm:online-to-batch} 
and Corollary~\ref{cor:online-to-batch-simple}.
\end{proof}

\subsection{An Improved Bound on the Probability in the Region of Disagreement}
\label{sec:PDIS}

Consider now the definitions from Section~\ref{sec:PDIS-results}.

To relate $\Px(\DIS(V_n))$ to $\hat{t}_n$, 
\citet*{hanneke:16b} proved that 
\begin{equation*}
\E\!\left[ \Px(\DIS(V_n)) \right] 
\leq \frac{\E[\hat{t}_{n+1}]}{n+1},
\end{equation*}
based on a leave-one-out argument.
While this bound on the expectation appears 
fairly tight, in contrast 
the analogous known bounds on $\Px(\DIS(V_n))$ 
holding with high probability $1-\delta$ 
each seem to involve some slack.
Specifically, \citet*{hanneke:15a} proved that, 
for any $\delta \in (0,1)$, with probability at least 
$1-\delta$, 
\begin{equation}
\label{eqn:PDIS-original-whey}
\Px( \DIS(V_n) ) \leq \frac{1}{n} \left( 10 
\hat{t}_{n} \ln\!\left( \frac{e n}{\hat{t}_n} \right) + 4 \ln\!\left(\frac{2}{\delta}\right) \right).
\end{equation}
This bound was refined by \citet*{hanneke:16b} 
to remove the factor $\ln\!\left(\frac{ e n }{\hat{t}_n} \right)$, 
but at the expense of larger numerical constants and 
a more-involved dependence on version space compression sizes.
Specifically, \citet*{hanneke:16b} proved that, 
with probability at least $1-\delta$, 
\begin{equation}
\label{eqn:PDIS-original-h}
\Px( \DIS(V_n) ) \leq \frac{16}{n} \left( 2 
\max_{i \leq n} \hat{t}_{i} + \ln\!\left(\frac{3}{\delta}\right) \right).
\end{equation}
The above bound features importantly in obtaining sharp 
distribution-dependent bounds on the label complexity 
of the CAL active learning algorithm 
\citep*{hanneke:16b}.
It is also an important component of 
the analysis of the  
risk of general \emph{empirical risk minimization} 
learning algorithms in traditional (passive) 
supervised learning, established by \citet*{hanneke:16b}.

The original proof of \eqref{eqn:PDIS-original-h} 
by \citet*{hanneke:16b} used the fact that 
the indicator function for $\DIS(V_n)$ can be 
expressed as a sample compression scheme 
(with the compression set being the 
subset of $S_n$ of size $\hat{t}_n$ 
from the definition of $\hat{t}_n$), 
and moreover that $\DIS(V_n)$ is monotonic in $n$.
However, \citet*{bousquet:20b} make the observation 
that this compression scheme is in fact \emph{stable}.
They use this fact to refine numerical constants 
in a particular \emph{distribution-free} bound on 
$\Px(\DIS(V_n))$ from \citet*{hanneke:16b} 
based on a combinatorial complexity measure 
called the \emph{star number} from 
\citet*{hanneke:15b}.  However, they did not 
explore the implications of this observation for 
refining the data-dependent bounds on $\Px(\DIS(V_n))$ 
based on the version space compression set size $\hat{t}_n$.
Here we show that our 
Theorem~\ref{thm:realizable} and 
Corollary~\ref{cor:realizable-simple} 
apply directly to this scenario, 
and offer an immediate improvement to the bounds 
\eqref{eqn:PDIS-original-whey} and \eqref{eqn:PDIS-original-h}
in two respects: namely, we can replace 
$\max_{i \leq n} \hat{t}_i$ with simply $\hat{t}_n$, 
and we can sharpen the numerical 
constant factors in the bound, yielding the 
result claimed in Theorem~\ref{thm:PDIS-coarse}.
Specifically, 
we have the following slightly more-detailed 
result, which immediately implies Theorem~\ref{thm:PDIS-coarse}.

\begin{theorem}
\label{thm:PDIS}
For any $n \in \nats$ and $\delta \in (0,1)$, 
with probability at least $1-\delta$, 
\begin{equation*}
\Px(\DIS(V_n)) \leq \frac{4}{n} \left( 6 \hat{t}_n + \ln\!\left(\frac{e}{\delta}\right) \right)
\end{equation*}
and if $\hat{t}_n < n/2$, 
\begin{equation*}
\Px(\DIS(V_n)) \leq \frac{2}{n-2\hat{t}_n}\left( \hat{t}_n \ln(4) + \ln\!\left(\frac{(\hat{t}_n+1)(\hat{t}_n+2)}{\delta}\right)\right).
\end{equation*}
\end{theorem}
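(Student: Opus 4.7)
The plan is to exhibit a stable sample compression scheme whose size equals $\hat{t}_n$ and whose reconstruction function produces a classifier whose $\PXY$-risk dominates $\Px(\DIS(V_n))$, so that Theorem~\ref{thm:realizable} and Corollary~\ref{cor:realizable-simple} apply off the shelf.

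To construct the scheme, I would enlarge the label alphabet to $\Y' := \Y \cup \{\star\}$ for a fresh symbol $\star \notin \Y$. Let $\kappa(S)$ be a minimum-cardinality subsequence $S' \subseteq S$ with $\H[S'] = \H[S]$, so that $|\kappa(S_n)| = \hat{t}_n$ by the very definition of $\hat{t}_n$. Let $\rho(S'): \X \to \Y'$ output the unique value shared by $\{h(x) : h \in \H[S']\}$ whenever that set is a singleton, and $\star$ whenever two classifiers in $\H[S']$ disagree at $x$ (using an arbitrary default in the degenerate case $\H[S'] = \emptyset$). Since $\PXY$ is supported on $\X \times \Y \subseteq \X \times \Y'$, the compression framework of Section~3 applies verbatim in the expanded alphabet.

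Next I would verify the hypotheses of Theorem~\ref{thm:realizable}. Stability holds because whenever $\kappa(S) \subseteq T \subseteq S$, the antitonicity of $\H[\cdot]$ sandwiches $\H[T]$ between $\H[S]$ and $\H[\kappa(S)] = \H[S]$, so $\H[T] = \H[S]$ and thus $\rho(\kappa(T))$ depends only on $\H[S]$; in particular it coincides with $\rho(\kappa(S))$. The empirical-risk-zero condition holds because for every $(X_i, Y_i) \in S_n$ each $h \in V_n$ satisfies $h(X_i) = Y_i$, so $V_n$ has the common value $Y_i$ at $X_i$ and $\rho(\kappa(S_n))(X_i) = Y_i$ (in particular $X_i \notin \DIS(V_n)$). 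Finally, $|\kappa(S_n)| = \hat{t}_n$ is immediate from the construction.

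Applying Theorem~\ref{thm:realizable} and Corollary~\ref{cor:realizable-simple} to $(\kappa, \rho)$ then yields the two forms of bound on $R_{\PXY}(\rho(\kappa(S_n)))$ stated in Theorem~\ref{thm:PDIS}, and the conclusion follows from the trivial inequality
\[
R_{\PXY}(\rho(\kappa(S_n))) \;\geq\; \PXY\!\left((x,y) : \rho(\kappa(S_n))(x) = \star\right) \;=\; \Px(\DIS(V_n)),
\]
which holds because $\star \notin \Y$ forces a misclassification at every point of $\DIS(V_n)$. No step is especially delicate; the only mild care needed is the edge case $V_n = \emptyset$, where $\DIS(V_n) = \emptyset$ and the bound is vacuous.
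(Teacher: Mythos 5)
Your proof is correct and follows essentially the same strategy as the paper: compress to a minimum subset inducing the same version space (of size $\hat{t}_n$), build a reconstruction that misclassifies precisely on $\DIS(V_n)$, verify stability via the antitonicity sandwich, and invoke Theorem~\ref{thm:realizable} and Corollary~\ref{cor:realizable-simple}. The only cosmetic difference is the encoding trick: the paper lifts the instance space to $\X \times \Y$ and uses a constant dummy label $-1$ so that the scheme's risk \emph{equals} $\Px(\DIS(V_n))$, whereas you adjoin a fresh symbol $\star$ to the label alphabet and use the inequality $R_{\PXY}(\rho(\kappa(S_n))) \geq \Px(\DIS(V_n))$, which suffices just as well.
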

\begin{proof}
Define a function $\hat{h}$ on $\X \times \Y$ 
as $\hat{h}(x,y) = 2\ind_{\DIS(V_n)}(x)-1$, 
and define a distribution $\tilde{\PXY}$ 
on $\X \times \Y \times \Y$ 
such that, 
for $(X,Y,Z) \sim \tilde{\PXY}$,
it holds that $(X,Y) \sim \PXY$, 
and that $Z=-1$ with probability one.
In particular, note that 
$R_{\tilde{\PXY}}(\hat{h}) = \Px(\DIS(V_n))$.
Also, for each $n$, define $\tilde{S}_n = \{ (X,Y,-1) : (X,Y) \in S_n \}$ (retaining the original order 
from $S_n$).
Note that, by the definition of $V_n$, we have 
$\hat{R}_{\tilde{S}_n}(\hat{h}) = 0$.
Furthermore, by the definition of $\hat{t}_n$, 
there exists a subset $S' \subseteq \tilde{S}_n$ 
of size at most $\hat{t}_n$ such that 
$\hat{h}(x,y) = 2 \ind_{\DIS(\H[\{(x,y) : (x,y,-1)\in S'\}])}(x)-1$, 
so that $\hat{h}$ may be viewed as a compression 
scheme $(\kappa,\rho)$ with $|\kappa(\tilde{S}_n)| =  \hat{t}_n$: that is, for any $\tilde{S}$, 
$\kappa(\tilde{S})$ selects any subset $S'$ 
of $\tilde{S}$ of minimum size such that 
$\H[\{ (x,y) : (x,y,-1) \in S' \}] = \H[ \{ (x,y) : (x,y,-1) \in \tilde{S} \} ]$, 
and $\rho(S')(x,y) = 2 \ind_{\DIS(\H[\{(x,y) : (x,y,-1) \in S' \}])}(x)-1$.
Furthermore, clearly any $S'' \subseteq \tilde{S}$ 
with $S' \subseteq S''$ has 
$\H[\{ (x,y) : (x,y,-1) \in S'' \}] = \H[ \{ (x,y) : (x,y,-1) \in \tilde{S} \}]$, 
so that $\rho(\kappa(S'')) = \rho(\kappa(\tilde{S}))$: 
that is, $(\kappa,\rho)$ is a \emph{stable} 
compression scheme.
Therefore, the theorem follows directly from 
Theorem~\ref{thm:realizable} 
and Corollary~\ref{cor:realizable-simple}.
\end{proof}

\begin{remark}
\label{rem:intervals-PDIS}
To illustrate that this can sometimes be a 
significant improvement over the previous 
results \eqref{eqn:PDIS-original-whey} 
and \eqref{eqn:PDIS-original-h}
of \citet*{hanneke:15b} and \citet*{hanneke:16b}, 
respectively, 
consider a scenario where 
$\X = [0,1]$, $\PXY$ has 
$\Px$ uniform on $\X$ 
and for $(X,Y) \sim \PXY$ 
we have $Y = 2 \ind_{[a,b]}(X)-1$, 
for some $a,b \in (0,1)$ with $a < b$.
For $n > \frac{1}{b-a}\ln\!\left(\frac{1}{\delta}\right)$, 
with probability greater than $1-\delta$ we have 
$\hat{t}_n \leq 4$. 
However, for $n < \frac{1}{b-a}$ there is a 
nonzero constant probability that all 
samples have negative labels, in which case 
$\hat{t}_n = n$.  
Thus, for all large values of $n$, 
the bound in Theorem~\ref{thm:PDIS} 
is $O\!\left( \frac{1}{n} \log\!\left(\frac{1}{\delta}\right) \right)$,
whereas the bound 
\eqref{eqn:PDIS-original-whey} is 
$\Omega\!\left( \frac{1}{n} \log\!\left(\frac{n}{\delta}\right) \right)$
and the bound 
\eqref{eqn:PDIS-original-h}
is 
$\Omega\!\left( \frac{1}{n} \left( \frac{1}{b-a} + \log\!\left(\frac{1}{\delta}\right) \right) \right)$.
Thus, the improvement in Theorem~\ref{thm:PDIS} 
can be quite significant when $b-a$ is relatively 
small, and $n$ is large relative to $\frac{1}{b-a}$.
\end{remark}

\subsection{Proof of the Improved Data-dependent Bound for All ERM Algorithms}
\label{sec:ERM-proof}

Here we present the details related to Theorem~\ref{thm:ERM-coarse}: 
data-dependent risk bounds holding for 
all ERM learning algorithms.

The result in Theorem~\ref{thm:ERM-coarse} 
improves over a previous result of 
\citet*{hanneke:16b}, which states that, 
with probability at least $1-\delta$, 
every $h \in V_n$ satisfies 
\begin{equation}
\label{eqn:ERM-h-bound}
R_{\PXY}(h) = O\!\left( \frac{1}{n} \left( \vc \log\!\left(\frac{\max_{i \leq n} \hat{t}_i}{\vc} \right) + \log\!\left(\frac{1}{\delta}\right) \right) \right).
\end{equation}
Comparing the two bounds reveals that 
the improvement in Theorem~\ref{thm:ERM-coarse} 
is in replacing $\max_{i \leq n} \hat{t}_i$ 
with $\hat{t}_{\lfloor n/2 \rfloor}$.
Recalling Remark~\ref{rem:intervals-PDIS} above, 
this change can sometimes be significant.
In particular, in the example discussed in that 
remark, for large $n$ the bound in 
Theorem~\ref{thm:ERM-coarse} would be 
$O\!\left( \frac{1}{n} \log\!\left(\frac{1}{\delta}\right) \right)$, 
whereas the previously known 
bound from \eqref{eqn:ERM-h-bound} 
would be $\Omega\!\left( \frac{1}{n} \log\!\left(\frac{1}{(b-a)\delta}\right)\right)$.
Thus, in this example, Theorem~\ref{thm:ERM-coarse} 
reflects a significant improvement when 
$b-a$ is small, 
and $n$ is large relative to $\frac{1}{b-a}$.

The proof of Theorem~\ref{thm:ERM-coarse} 
follows identical arguments to those used 
by \citet*{hanneke:16b} to prove 
\eqref{eqn:ERM-h-bound}, aside from 
substituting the bound on 
$\Px(\DIS(V_{\lfloor n/2 \rfloor}))$ 
from Theorem~\ref{thm:PDIS} in place 
of the bound \eqref{eqn:PDIS-original-h} 
also established by \citet*{hanneke:16b}.
As such we omit the details, and merely 
sketch the main ideas underlying the 
argument.

The proof follows a ``conditioning'' 
argument common to this literature on refining 
log factors in risk bounds, originating 
in a proof from \citet*{hanneke:thesis} 
of a related distribution-dependent 
bound for all ERM learners
(which itself is slightly looser than \eqref{eqn:ERM-h-bound}; 
see \citealp*{hanneke:15b} and \citealp*{hanneke:16b}
for relations between the relevant complexity measures).
The high-level idea is to note that 
for $S_n = \{(X_i,Y_i)\}_{i=1}^n$, 
the set 
$D_n := \{ (X_i,Y_i) : i > n/2, X_i \in \DIS(V_{\lfloor n/2 \rfloor}) \}$
contains roughly $(n/2) \Px(\DIS(V_{\lfloor n/2 \rfloor}))$ elements (with high probability), 
which are conditionally i.i.d.\ with distribution 
$Q := \Px(\cdot | X \in \DIS(V_{\lfloor n/2 \rfloor}))$
given $S_{\lfloor n/2 \rfloor}$ and $|D_n|$.
Furthermore, any $h \in V_n$ has 
$\hat{R}_{D_n}(h) = 0$.
Thus, applying classic generalization bounds for 
ERM from \citet*{vapnik:74,blumer:89} implies that
(with high probability) every $h \in V_n$
has 
\begin{align*}
R_Q(h) &
= O\!\left( \frac{1}{|D_n|} \left( \vc \log\!\left(\frac{|D_n|}{\vc}\right) + \log\!\left(\frac{1}{\delta}\right)\right)\right)
\\ & = O\!\left( \frac{1}{n \Px(\DIS(V_{\lfloor  n/2 \rfloor}))} \left( \vc \log\!\left(\frac{n\Px(\DIS(V_{\lfloor  n/2 \rfloor}))}{\vc}\right) + \log\!\left(\frac{1}{\delta}\right)\right)\right).
\end{align*}
Since every $h \in V_n$ agrees with the 
\emph{best} classifier $h^* \in \H$ 
on $\X \setminus \DIS(V_{\lfloor n/2 \rfloor})$, 
we have $R_{\PXY}(h) = R_{Q}(h) \Px(\DIS(V_{\lfloor n/2 \rfloor}))$, so that 
\begin{equation*}
R_{\PXY}(h) 
= O\!\left( \frac{1}{n} \left( \vc \log\!\left(\frac{n\Px(\DIS(V_{\lfloor  n/2 \rfloor}))}{\vc}\right) + \log\!\left(\frac{1}{\delta}\right)\right)\right).
\end{equation*}
Finally, plugging in the bound on 
$\Px(\DIS(V_{\lfloor n/2 \rfloor}))$ 
from Theorem~\ref{thm:PDIS}, together with a union 
bound over the above high-probability events, 
and simplifying the expression, we get 
that (with high probability) each $h \in V_n$ 
satisfies 
\begin{equation*}
R_{\PXY}(h) 
= O\!\left( \frac{1}{n} \left( \vc \log\!\left(\frac{\hat{t}_{\lfloor n/2 \rfloor}}{\vc}\right) + \log\!\left(\frac{1}{\delta}\right)\right)\right),
\end{equation*}
which is the bound claimed by Theorem~\ref{thm:ERM-coarse}.
Readers interested in the details (and handling 
corner cases, numerical constants, and such things) 
are refered to the detailed proof of 
\eqref{eqn:ERM-h-bound} by \citet*{hanneke:16b}.

\subsection{Proof of the Improved Bound for Compressed 1-Nearest Neighbor}
\label{sec:optinet-proof}

We present here the details regarding 
Theorem~\ref{thm:NN-bernstein}: 
the a generalization bound for 
the compression-based nearest neighbor predictor.
We have the following result, 
from which Theorem~\ref{thm:NN-bernstein} 
immediately follows.

\begin{theorem}
\label{thm:NN-bernstein-full}
Fix any $\gamma > 0$.
For any distribution $\PXY$, any $n \in \nats$, 
and any $\delta \in (0,1)$, 
for $S \sim \PXY^{n}$, 
with probability at least $1-\delta$, 
the classifier $\hat{h}_{\gamma} = \alg_{\gamma}(S)$ 
satisfies 
\begin{align*}
\left| R_{\PXY}(\hat{h}_{\gamma}) - \hat{R}_{S}(\hat{h}_{\gamma}) \right| 
\leq \sqrt{ \hat{R}_{S}(\hat{h}_{\gamma}) \frac{72}{n} \left( 4|N_{\gamma}| + \ln\!\left(\frac{4e}{\delta}\right) \right) }
+ \frac{32}{n} \left( 4|N_{\gamma}| + \ln\!\left(\frac{4e}{\delta}\right) \right).
\end{align*}
\end{theorem}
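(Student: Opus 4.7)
The plan is to realize $\hat{h}_{\gamma} = \alg_{\gamma}(S)$ as the output of a stable compression scheme $(\kappa,\rho)$ whose compression set has size at most $2|N_{\gamma}|$, and then invoke Corollary~\ref{cor:bernstein-simple}. Once that is in place, substituting $2|\kappa(S)| \leq 4|N_{\gamma}|$ into the corollary's bound yields exactly the form appearing in Theorem~\ref{thm:NN-bernstein-full}.

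For the compression, I take $\kappa(S)$ to consist of the greedy $\gamma$-net $N_{\gamma}$ together with, for each Voronoi cell $C_i$ around the net points, one additional representative point $r_i \in C_i$ carrying the majority label of $C_i$ (if the net point itself already bears that label, no extra point need be added). The total size is at most $2|N_{\gamma}|$. The reconstruction $\rho$, applied to any compressed sequence, begins by running the greedy $\gamma$-net construction on the sequence itself, which recovers exactly $N_{\gamma}$ because every representative lies within $\gamma$ of its corresponding net point and is therefore skipped by the greedy rule; the reconstructed classifier then predicts, on each Voronoi cell, the label carried by the representative in that cell (or the net point's own label if no representative was stored). By design $\rho(\kappa(S)) = \hat{h}_{\gamma}$ pointwise.

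For stability, I need $\rho(\kappa(S \setminus S')) = \rho(\kappa(S))$ for every $S' \subseteq S \setminus \kappa(S)$. The net portion of the compression is stable by exactly the observation already flagged in the excerpt: the greedy construction returns the same $N_{\gamma}$ after deleting any points not in $N_{\gamma}$, so the cells $C_i$ are unchanged under restriction to $S \setminus S'$. What remains is to arrange the representative-selection rule so that the label attached by $\rho \circ \kappa$ to each cell is the same whether the input is $S$ or $S \setminus S'$. I plan to do this by tying the rule to the compression itself (taking $r_i$ to be the representative already recorded in the stored sequence, i.e., the unique non-net element of $\kappa(T) \cap C_i$ for the current input $T$), rather than to any raw statistic computed from the whole input; since $\kappa(S) \subseteq S \setminus S' \subseteq S$, the same representatives and labels then appear for every such $T$.

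The main obstacle is precisely this last point in the stability verification: a naive rule such as ``pick any representative whose label equals the observed cell-majority in the current input'' would be unsafe, because removing enough majority-labeled non-compression points could in principle flip the observed cell-majority and cause $\kappa$ to select a representative with a different label. The proof must therefore be explicit about how the representative selection is arranged so that $\rho \circ \kappa$ delivers the same per-cell label across all inputs $T$ with $\kappa(S) \subseteq T \subseteq S$. Once this is in hand, the compression scheme is stable with $|\kappa(S)| \leq 2|N_{\gamma}|$, and the statement of Theorem~\ref{thm:NN-bernstein-full} follows immediately from Corollary~\ref{cor:bernstein-simple}.
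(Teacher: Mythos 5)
Your proposal identifies the key obstacle but does not overcome it, and the obstacle is in fact fatal for the route you chose: no single stable compression scheme $(\kappa,\rho)$ with $|\kappa(S)|$ bounded by a function of $|N_{\gamma}|$ alone can satisfy $\rho(\kappa(S)) = \hat{h}_{\gamma}$ on \emph{every} input $S$. The cell-majority label is inherently unstable under deletion of non-compression points. Concretely, fix a cell $C_i$ whose net point is labeled $-1$, and let $S \cap C_i$ additionally contain $p_1,p_2$ labeled $-1$ and $p_3,p_4,p_5$ labeled $+1$, so the $S$-majority in $C_i$ is $+1$. If $|\kappa(S) \cap C_i| \leq 2$, then at least two of $p_3,p_4,p_5$ lie outside $\kappa(S)$; deleting them yields $T$ with $\kappa(S) \subseteq T \subseteq S$ whose $T$-majority in $C_i$ is $-1$. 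Stability forces $\rho(\kappa(T)) = \rho(\kappa(S))$, so $\rho\circ\kappa$ cannot agree with the majority-vote classifier on both $T$ and $S$. The example scales to defeat any compression budget depending only on $|N_{\gamma}|$. Your device of ``tying the representative to the stored sequence'' is circular and cannot break this: $\kappa$ is a fixed map of its own argument and has no memory of a larger $S$ from which the current input may have been carved.

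The paper's proof avoids encoding any labels into $\kappa$ at all. It fixes $\kappa(S) = N_{\gamma}$ (size $|N_{\gamma}|$, genuinely stable since deleting non-net points leaves the greedy net unchanged), and for each $k \in \nats$ and $\mathbf{b} \in \Y^k$ defines a \emph{data-independent} reconstruction $\rho_{\mathbf{b}}$ that hard-codes the per-cell predictions. Each pair $(\kappa,\rho_{\mathbf{b}})$ is then trivially stable, and $\hat{h}_{\gamma} = \rho_{\mathbf{b}}(\kappa(S))$ for the particular $\mathbf{b} \in \Y^{|N_{\gamma}|}$ recording the cell-majorities of $S$. Applying Corollary~\ref{cor:bernstein-simple} at confidence $\delta/2^{2k}$ and union-bounding over all $(k,\mathbf{b})$ contributes an extra $2k$ inside the parentheses, and at $k = |N_{\gamma}|$ this yields the $4|N_{\gamma}|$ appearing in the statement. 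To salvage your approach you would need to replace the single-scheme stability claim with a union bound of exactly this kind, over reconstruction functions that do not consult the data.
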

\begin{proof}
We prove this by arguing that $\hat{h}_{\gamma}$ 
is the output of $\rho(\kappa(S))$ 
for some stable compression scheme $(\kappa,\rho)$ 
with $|\kappa(S)|=|N_{\gamma}|$ contained in a 
particular family of such compression schemes.
Specifically, for each $k \in \nats$ 
and $\mathbf{b} = (b_1,\ldots,b_k) \in \Y^k$, 
define a compression scheme $(\kappa,\rho_{\mathbf{b}})$
such that $\kappa(S) = N_{\gamma}$ (the $\gamma$-net 
corresponding to $S$), and 
$\rho_{\mathbf{b}}(\kappa(S))$ is a function such that, 
for any $x$, if the nearest neighbor of $x$ 
among $N_{\gamma}$ is the $i^{{\rm th}}$ element 
of $N_{\gamma}$, then if $i \leq k$ 
then $\rho_{\mathbf{b}}(\kappa(S)) = b_i$, 
and otherwise if $i > k$ then $\rho_{\mathbf{b}}(\kappa(S)) = -1$.
Since $\kappa(S)$ stable, in the sense that 
any $S' \subset S$ with $\kappa(S) \subseteq S'$ 
has $\kappa(S') = \kappa(S)$, 
it follows that $(\kappa,\rho_{\mathbf{b}})$ 
is a stable compression scheme.
Thus, for each $k \in \nats$ 
and $\mathbf{b} \in \Y^k$, 
Corollary~\ref{cor:bernstein-simple} implies 
that, with probability at least $1 - \frac{\delta}{2^{2k}}$, it holds that 
\begin{align*}
& \left| R_{\PXY}(\rho_{\mathbf{b}}(\kappa(S))) - \hat{R}_{S}(\rho_{\mathbf{b}}(\kappa(S))) \right| 
\\ & \leq \sqrt{ \hat{R}_{S}(\rho_{\mathbf{b}}(\kappa(S))) \frac{72}{n} \left( 2|N_{\gamma}|+2k + \ln\!\left(\frac{4e}{\delta}\right) \right) }
+ \frac{32}{n} \left( 2|N_{\gamma}|+2k + \ln\!\left(\frac{4e}{\delta}\right) \right).
\end{align*}
By the union bound, this holds simultaneously 
for all $k \in \nats$ and $\mathbf{b} \in \Y^k$, 
with probability at least $1-\delta$.
In particular, note that we are guarnateed 
that $\hat{h}_{\gamma} = \rho_{\mathbf{b}}(\kappa(S))$
for some $\mathbf{b} \in \Y^{|N_{\gamma}|}$.
Thus, with probability at least $1-\delta$, 
\begin{align*}
& \left| R_{\PXY}(\hat{h}_{\gamma}) - \hat{R}_{S}(\hat{h}_{\gamma}) \right| 
= 
\left| R_{\PXY}(\rho_{\mathbf{b}}(\kappa(S))) - \hat{R}_{S}(\rho_{\mathbf{b}}(\kappa(S))) \right|
\\ & 
\leq \sqrt{ \hat{R}_{S}(\hat{h}_{\gamma}) \frac{72}{n} \left( 4|N_{\gamma}| + \ln\!\left(\frac{4e}{\delta}\right) \right) }
+ \frac{32}{n} \left( 4|N_{\gamma}| + \ln\!\left(\frac{4e}{\delta}\right) \right).
\end{align*}
\end{proof}

\bibliography{learning}

\end{document}